\newcommand {\lingconc} {\mathcal{S}}
\newcommand {\ent} {\mathrel{{\scriptstyle\mid\!\sim}}}
\newcommand {\sx} {\langle}
\newcommand {\dx} {\rangle}
\newcommand {\enne} {\mathcal{N}}
\newcommand{\tip}{{\bf T}}
\newcommand{\alc}{\mathcal{ALC}}
\newcommand{\lc}{\mathcal{LC}}
\newcommand{\el}{\mathcal{EL}^{\bot}}
\newcommand{\elpb}{{\mathcal{EL}}^{+}_{\bot}}
\newcommand{\be}{\begin{enumerate}}
\newcommand{\ee}{\end{enumerate}}
\newcommand{\hide}[1]{}
\def \cases{\left \{\begin{array}{l}}
\def \endcases{\end{array}\right .}
\newcommand {\ri} {\rightarrow}
\newcommand {\Ri} {\Rightarrow}
\newcommand {\bes} {\begin{description}}
\newcommand{\ens} {\end{description}}
\newcommand {\la} {\langle}
\newcommand {\ra} {\rangle}
\newcommand {\beq} {\begin{quote}}
\newcommand {\enq} {\end{quote}}
\newcommand {\bit} {\begin{itemize}}
\newcommand {\enit} {\end{itemize}}
\newcommand{\prj}{{\iota}}
\newenvironment{pozz}{\color{black}}{\color{black}}
\begin{document}
\bibliographystyle{plain}

 \title {An ASP approach for reasoning on neural networks 
  under a  finitely  many-valued semantics \\
 for weighted conditional knowledge bases }

\author{Laura Giordano \inst{1} \and Daniele Theseider Dupr{\'{e}}  \inst{1}}

\institute{DISIT - Universit\`a del Piemonte Orientale, 
 Alessandria, Italy 
 \and
Center for Logic, Language and Cognition, 
Dipartimento di Informatica, \\
Universit\`a di Torino, Italy, 
}

\authorrunning{ }
\titlerunning{ }

 \maketitle
 
\begin{abstract}
Weighted knowledge bases for description logics with typicality  have been recently considered under a ``concept-wise'' multipreference semantics (in both the two-valued and fuzzy case), as the basis of a logical semantics of  MultiLayer Perceptrons (MLPs). In this paper we consider weighted conditional $\alc$ knowledge bases with typicality in the finitely many-valued case,  
through three different semantic constructions.
For the boolean fragment $\lc$ of $\alc$ we exploit  ASP and {\em asprin}  for reasoning with the concept-wise multipreference entailment
under a {\em$\varphi$-coherent semantics}, suitable to characterize the stationary states of MLPs. As a proof of concept, we experiment the proposed approach  for checking properties of trained MLPs.

{\em The paper is under consideration for acceptance in TPLP.}

\end{abstract}
%\vspace{-0.3cm}

%\vspace{-0.1cm}
\section{Introduction}
%\vspace{-0.1cm}

Preferential approaches to common sense reasoning \cite{KrausLehmannMagidor:90,Pearl90,whatdoes,BenferhatIJCAI93,Kern-Isberner01}
have been extended to description logics (DLs), to deal with inheritance with exceptions in ontologies,
by allowing for non-strict inclusions,
called {\em typicality or defeasible inclusions},
with different preferential semantics  \cite{lpar2007,sudafricaniKR} 
and closure constructions \cite{casinistraccia2010,CasiniJAIR2013,AIJ15}. %CasiniDL2013,CasiniStracciaM19}). %Pensel18

In recent work, a concept-wise multipreference semantics has been proposed \cite{TPLP2020} as a semantics for ranked DL knowledge bases (KBs),
 i.e. knowledge bases in which defeasible or typicality inclusions of the form $\tip(C) \sqsubseteq D$ (meaning ``the typical $C$'s are $D$'s" or ``normally $C$'s are $D$'s") are given a rank, a natural number, representing 
their strength, where $\tip$ is a typicality operator \cite{lpar2007} that singles out the typical instances of concept $C$. 
The concept-wise multipreference semantics takes into account preferences with respect to different concepts, and integrates them into a single global preference relation,  which is used in the evaluation of defeasible inclusions.
Answer Set Programming (ASP) and, in particular, the {\em asprin}  framework for answer set preferences \cite{BrewkaAAAI15}, is exploited to achieve defeasible reasoning under the multipreference approach for $\elpb$ \cite{rifel}.

In \cite{JELIA2021}, the multi-preferential semantics has been extended to weighted knowledge bases, in which typicality inclusions have a real (positive or negative) weight, representing plausibility or implausibility. 
The multipreference semantics has been exploited to provide a preferential interpretation to Multilayer Perceptrons (MLPs) \cite{Haykin99}, an approach previously considered \cite{CILC2020,JLC2022} for self-organising maps  (SOMs) \cite{kohonen2001}.
In both cases, considering the domain of all input stimuli presented to the network during training (or in the generalization phase), one can build a semantic interpretation describing the input-output behavior of the network as a multi-preference interpretation, where preferences are associated to concepts.
For MLPs, based on the fuzzy multipreference semantics for weighted KBs, 
a deep neural network can actually be regarded as a weighted conditional knowledge base \cite{JELIA2021}. 
This rises the issue of defining proof methods for reasoning with weighted conditional knowledge bases.

Undecidability results for fuzzy DLs with general inclusion axioms 
\cite{CeramiStraccia2011,BorgwardtPenaloza12} 
motivate the investigation of many-valued approximations of fuzzy multipreference entailment.
In this paper,  we restrict to the case of  finitely many-valued Description Logics 
 \cite{GarciaCerdanaAE2010,BobilloStraccia_InfSci_11,BobilloDelgadoStraccia2012,BorgwardtPenaloza13},
and reconsider the fuzzy multipreference semantics based on the notions of {\em coherent}  \cite{JELIA2021}, {\em faithful} \cite{ECSQARU2021} and   {\em $\varphi$-coherent} \cite{TR_Argumentation} model of a defeasible KB.
The last notion is suitable to characterize the stationary states of MLPs and is related to the previously introduced notion of coherent multipreference interpretation.

We consider the finitely many-valued G\"odel description logic $G_n \alc$, and the  finitely many-valued \L ukasiewicz DL, $\L _n \alc$, and develop their extension with typicality and a semantic closure construction based on coherent, faithful and $\varphi_n$-coherent interpretations to deal with weighted KBs.
For the boolean fragment $\lc$ of $\alc$, which neither contains roles, nor universal and existential restrictions,
we develop an ASP approach for deciding $\varphi_n$-coherent entailment from weighted knowledge bases in the finitely many-valued case.
In particular, we  develop an ASP encoding of a weighted KB and exploit  {\em asprin} \cite{BrewkaAAAI15} for defeasible reasoning, to prove typicality properties of a weighted conditional KB. From the soundness and completeness of the encoding, we also get a $\Pi^p_2$ complexity upper-bound for $\varphi_n$-coherent entailment.

As a proof of concept, we experiment our approach over weighted KBs corresponding to some of the trained multilayer feedforward networks
considered by Thrun et al. \cite{monk}. We exploit ASP to verify some properties of the network expressed as typicality properties in the finite many-valued case.
This is a step towards explainability of the black-box, in view of a trustworthy, reliable and explainable AI \cite{Adadi18,Guidotti2019,Arrieta2020}, and 
of an integrated use of symbolic reasoning and neural models.

\section{Finitely many-valued $\alc$}  \label{sec:ALC}

Fuzzy description logics have been widely studied in the literature for representing vagueness in DLs \cite{Straccia05,Stoilos05,LukasiewiczStraccia09,BorgwardtPenaloza12,BobilloOWL2EL2018},  %PenalosaARTINT15
based on the idea that concepts and roles can be interpreted 
as fuzzy sets and fuzzy relations.

In fuzzy logic formulas have a truth degree from a truth space  $\cal S$, usually $[0, 1]$ (as in in Mathematical Fuzzy Logic \cite{Cintula2011})
or $\{0, \frac{1}{n},\ldots, \frac{n-1}{n}, \frac{n}{n}\}$, for an integer $n \geq 1$. $\cal S$ may as well be a complete lattice or a bilattice.
% e.g,\cite{BorgwardtPenaloza13}. %\cite{Fitting2002}.

The finitely many-valued case is also well studied for DLs \cite{GarciaCerdanaAE2010,BobilloStraccia_InfSci_11,BobilloDelgadoStraccia2012,BorgwardtPenaloza13} and, in the following, we will consider a finitely many-valued extension of $\alc$ with typicality.

The basic $\alc$ syntax features a set ${N_C}$ of concept names,  a set ${N_R}$ of role names and a set ${N_I}$ of individual names.  
The set  of $\alc$ \emph{concepts} can be
defined inductively: \\ % as follows:\\
- $A \in N_C$, $\top$ and $\bot$ are {concepts};\\
- if $C$ and $ D$ are concepts, and $r \in N_R$, then $C \sqcap D,\; C \sqcup D,\; \neg C, \; \forall r.C,\; \exists r.C$ 
are {concepts}.

We assume the truth space to be %the finite chain 
${\cal C}_n= \{0, \frac{1}{n},\ldots, \frac{n-1}{n}, \frac{n}{n}\}$, for an integer $n \geq 1$.
A {\em finitely many-valued interpretation} for $\alc$ is a pair $I=\langle \Delta, \cdot^I \rangle$ where:
%\begin{itemize}
%\item[$\bullet$] 
$\Delta$ is a non-empty domain and 
%\item[$\bullet$]  
$\cdot^I$ is an {\em interpretation function} that assigns 
to each $a \in N_I$ a value $a^I \in\Delta$, 
to each  $A\in N_C$ a function  $A^I :  \Delta \ri {\cal C}_n $,
to each  $r \in N_R$  a function  $r^I:   \Delta \times  \Delta \ri  {\cal C}_n $. 
A domain element $x \in \Delta$ 
belongs to the extension of concept name $A$ to some degree $A^I(x)$ in ${\cal C}_n$.

%\end{itemize}
The  interpretation function $\cdot^I$ is extended to complex concepts as follows: 

$\mbox{\ \ \ }$ $\top^I(x)=1$, $\mbox{\ \ \ \ \ \ \ \ \ \  \ \ \ \ \ \ \ \ \ \ }$ $\bot^I(x)=0$,  $\mbox{\ \ \ \ \ \ \ \ \ \ \ \ \ \ \  \ \ \ \ \ }$  $(\neg C)^I(x)= \ominus C^I(x)$, 

$\mbox{\ \ \ }$  $(\exists r.C)^I(x) = sup_{y \in \Delta} \; r^I(x,y) \otimes C^I(y)$,  $\mbox{\ \ \ \ \ \ }$  $(C \sqcup D)^I(x) =C^I(x) \oplus D^I(x)$ 

$\mbox{\ \ \ }$  $(\forall r.C)^I (x) = inf_{y \in \Delta} \;  r^I(x,y) \rhd C^I(y)$, $\mbox{\ \ \ \ \ \ \ }$  $(C \sqcap D)^I(x) =C^I(x) \otimes D^I(x)$ 

\noindent
where  $x \in \Delta$ and $\otimes$, $\oplus$, $\rhd$ and $\ominus$ are arbitrary but fixed t-norm, s-norm, implication function, and negation function \cite{LukasiewiczStraccia09}.
In particular, in this paper we consider two finitely many-valued description logics based on $\alc$, 
the finitely many-valued \L ukasiewicz description logic $\alc$ (called $\L_n \alc$ in the following) as well as 
the finitely many-valued G\"odel description logic $\alc$, 
extended with a standard involutive negation $ \ominus a = 1-a$ (called $G_n \alc$ in the following).
Such logics are defined along the lines of the finitely many-valued description logic $\cal SROIQ$ \cite{BobilloStraccia_InfSci_11}, the logic GZ $\cal SROIQ$ \cite{BobilloDelgadoStraccia2012},
and the logic  $\alc^*(S)$ \cite{GarciaCerdanaAE2010}, where $*$ is a divisible finite t-norm over a chain of n elements. 

Specifically, in an $\L_n \alc$ interpretation, we let:  $a \otimes b= max\{a+b-1,0\}$,  $a \oplus b= min\{a+b,1\}$, 
 $a \rhd b= min\{1- a+b,1\}$ and $ \ominus a = 1-a$. 
In a $G_n \alc$ interpretation, we let: $a \otimes b= min\{a,b\}$,  $a \oplus b= max\{a,b\}$,  $a \rhd b= 1$ {\em if} $a \leq b$ {\em and} $b$ {\em otherwise}; and $ \ominus a = 1-a$.

The  interpretation function $\cdot^I$ is also extended  to $\alc$ concept inclusions of the form $C \sqsubseteq D$ (where $C$ and $D$ are $\alc$ concepts), and to $\alc$ assertions of the form $C(a)$ and $r(a,b)$  (where $C$ is an $\alc$ concept, $r\in N_R$, $a,b \in N_I$),  %of an $\alc$ knowledge base, 
as follows:\\
 $(C \sqsubseteq D)^I= inf_{x \in \Delta}  C^I(x) \rhd D^I(x)$,
$\mbox{\ \ \ }$  $(C(a))^I=C^I(a^I)$,  $\mbox{\ \ \ }$  $(R(a,b))^I=R^I(a^I,b^I)$.

A {\em  $G_n \alc$ ($\L_n \alc$)  knowledge base} 
$K$ is a pair $({\cal T}, {\cal A})$ where ${\cal T}$ is a TBox  and ${\cal A}$ an ABox. A TBox ${\cal T}$ is a set of $G_n \alc$ ($\L_n \alc$)  {\em concept inclusions} of the form $C \sqsubseteq D \;\theta\; \alpha$, where $C \sqsubseteq D$ is an $\alc$ concept inclusion, $\theta \in \{\geq,\leq,>,<\}$ and $\alpha \in [0,1]$. An ABox ${\cal A}$ is a set of $G_n \alc$ ($\L_n \alc$)  {\em assertions} of the form $C(a) \; \theta \alpha$ \ or \ $r(a,b) \; \theta \alpha$, where $C$ is an $\alc$ concept, $r\in N_R$, $a,b \in N_I$,  $\theta \in \{{\geq,}\leq,>,<\}$ and $\alpha \in [0,1]$.

The notions of satisfiability of a KB  in a many-valued interpretation and of $G_n \alc$ ($\L_n \alc$) entailment are defined as follows: %in the natural way.
\begin{definition}[Satisfiability and entailment for $G_n \alc$ and $\L_n \alc$] \label{satisfiability}
A  $G_n \alc$ ($\L_n \alc$)  interpretation $I$ {\em satisfies} a $G_n \alc$ ($\L_n \alc$)  axiom $E$, %(denoted $I \models E$), 
as follows:

- $I$ satisfies %a fuzzy $\alc$ inclusion 
axiom $C \sqsubseteq D \;\theta\; \alpha$ if $(C \sqsubseteq D)^I \theta\; \alpha$;

- $I$ satisfies assertion $C(a) \; \theta \; \alpha$ if $C^I(a^I) \theta\; \alpha$;
 
- $I$ satisfies assertion $r(a,b) \; \theta \; \alpha$ if $r^I(a^I,b^I) \theta\; \alpha$.

\vspace{1mm}

\noindent
% where $\theta \in \{\geq,\leq,>,<\}$. \\
Given  a $G_n \alc$ ($\L_n \alc$) knowledge base $K=({\cal T}, {\cal A})$,
a $G_n \alc$ ($\L_n \alc$)  interpretation $I$  satisfies ${\cal T}$ (resp. ${\cal A}$) if $I$ satisfies all  inclusions in ${\cal T}$ (resp. all assertions in ${\cal A}$).
A $G_n \alc$ ($\L_n \alc$) interpretation $I$ is a $G_n \alc$ ($\L_n \alc$) \emph{model} of $K$ if $I$ satisfies ${\cal T}$ and ${\cal A}$.
A $G_n \alc$ ($\L_n \alc$) axiom $E$   {\em is entailed by  knowledge base $K$}, written $K \models_{G_n \alc} E$ (resp. $K \models_{\L_n \alc} E$), if for all $G_n \alc$ ($\L_n \alc$) models $I=$$\sx \Delta,  \cdot^I\dx$ of $K$, 
$I$ satisfies $E$.
\end{definition}

\section{Finitely many-valued $\alc$ with typicality  }\label{sec:fuzzyalc+T} % $\alcFnt$

In this section, we consider an extension of finitely many-valued $\alc$ with typicality concepts, based on a preferential semantics, first introduced by Giordano and Theseider Dupr\'e \cite{JELIA2021} for weighted $\el$ knowledge bases (we adopt an equivalent slight reformulation of the semantics developed for fuzzy $\alc$ 
\cite{ECSQARU2021}).
%
%which is a slight (and equivalent) reformulation of the semantics for fuzzy weighted knowledge bases .
%
%
%In this section, we consider an extension of finitely many-valued $\alc$ with typicality concepts, following the same construction as in \cite{JELIA2021}. %,ECSQARU2021}.
%Typicality concepts of the form $\tip(C)$ are added, where $C$ is a concept in $\alc$.
The idea is similar to the extension of $\alc$ with typicality in the two-valued case \cite{lpar2007} %FI09}, 
but %as in the fuzzy case, 
the degree of membership of domain individuals in a concept $C$ is used to identify the typical elements of $C$.
The extension allows for the definition of {\em  typicality inclusions} of the form %fuzzy  typicality inclusions
$\tip(C) \sqsubseteq D \;\theta \; \alpha$. %\marginpar{usare alpha invece di n?}
For instance, $\tip(C) \sqsubseteq D \geq  \alpha$
means that typical $C$-elements are $D$-elements with degree greater than $\alpha$. In the two-valued case, a typicality  inclusion $\tip(C) \sqsubseteq D$ corresponds to a KLM conditional implication $C \ent D$ \cite{KrausLehmannMagidor:90,whatdoes}. % but now with an associated degree.
%
%We call $\alcFnt$ the extension of finite many-valued fuzzy $\alc$ with typicality.
As in the two-valued case,  
%and in the propositional typicality logic, PTL, \cite{BoothCasiniAIJ19} 
 nesting of the typicality operator is not allowed.

Observe that, in a many-valued $\alc$ interpretation $I= \langle \Delta, \cdot^I \rangle$, the degree of membership $C^I(x)$ of the domain elements $x$ in a concept $C$ induces a preference relation $<_C$ on $\Delta$: % as follows:
\begin{equation}\label{def:induced_order}
x <_C y \mbox{ iff } C^I(x) > C^I(y)
\end{equation}
For a finitely many-valued $\alc$ interpretation $I= \langle \Delta, \cdot^I \rangle$, each preference relation $<_{C}$ has the properties of preference relations in KLM-style ranked interpretations \cite{whatdoes}, that is,  $<_{C}$ is a modular and well-founded strict partial order. 
Let us recall that $<_{C}$ is {\em well-founded} 
if there is no infinite descending chain 
%$x_1 <_C x_0$, $x_2 <_C x_1$, $x_3 <_C x_2, \ldots $ 
of domain elements;
    $<_{C}$ is {\em modular} if,
for all $x,y,z \in \Delta$, $x <_{C} y$ implies ($x <_{C} z$ or $z <_{C} y$).
Well-foundedness holds for the induced preference $<_C$ defined by condition (\ref{def:induced_order}) as we have assumed the truth space to be ${\cal C}_n$.
We will denote the extensions of $\L_n \alc$ and $G_n \alc$ with typicality, respectively, by $\L_n \alc \tip$ and $G_n \alc \tip$.

Each relation $<_C$ has  the properties of a preference relation in KLM  rational interpretations \cite{whatdoes}, also called ranked interpretations. As  many-valued interpretations induce multiple preferences, they can be regarded as {\em multi-preferential} interpretations, which have also been studied in the two-valued case \cite{TPLP2020,Delgrande2020,AIJ21,Casini21}.

The preference relation $<_C$ captures the relative typicality of domain elements wrt concept $C$ and may then be used to identify the {\em typical  $C$-elements}. We regard typical $C$-elements as the domain elements $x$ that  are preferred with respect to $<_C$
among the ones such that $C^I(x) \neq 0$. 
Let $C^I_{>0}$ be the crisp set containing all domain elements $x$ such that $C^I(x)>0$, that is, $C^I_{>0}= \{x \in \Delta \mid C^I(x)>0 \}$.
One can provide a (two-valued) interpretation of typicality concepts $\tip(C)$ with respect to an interpretation $I$ as: 
\begin{align}\label{eq:interpr_typicality}
	(\tip(C))^I(x)  & = \left\{\begin{array}{ll}
						 1 & \mbox{ \ \ \ \  if } x \in min_{<_C} (C^I_{>0}) \\
						0 &  \mbox{ \ \ \ \  otherwise } 
					\end{array}\right.
\end{align} 
where $min_<(S)= \{u: u \in S$ and $\nexists z \in S$ s.t. $z < u \}$.  When $(\tip(C))^I(x)=1$, $x$ is said to be a typical $C$-element in $I$.
Note that, if $C^I(x)>0$ for some $x \in \Delta$,  
$min_{<_C} (C^I_{>0}) \neq \emptyset$. %is non-empty.
This generalizes the property that, in the crisp case, $C^I\neq \emptyset$ implies  $(\tip(C))^I\neq \emptyset$.
\begin{definition}[$G_n \alc \tip$ ($\L_n \alc \tip$) interpretation]
%[$\alcFnt$ interpretation]
A $G_n \alc \tip$ (resp., $\L_n \alc \tip$) interpretation $I= \langle \Delta, \cdot^I \rangle$ is a finitely many-valued $G_n \alc$ (resp., $\L_n \alc \tip$) interpretation over ${\cal C}_n$, extended by interpreting typicality concepts according to (\ref{eq:interpr_typicality}).
\end{definition}
The many-valued interpretation  $I= \langle \Delta, \cdot^I \rangle$ implicitly defines a multi-preferential interpretation, where any concept $C$ is associated to a preference  relation $<_C$.  
The notions of {\em satisfiability} in $G_n \alc \tip$ ($\L_n \alc \tip$), %$\alcFnt$,   
of {\em model} of a $G_n \alc \tip$ ($\L_n \alc \tip$) knowledge base, and of $G_n \alc \tip$ ($\L_n \alc \tip$)  {\em entailment} can be defined 
similarly to those for $\L_n \alc$ and $G_n \alc$ in Section  \ref{sec:ALC}.

\subsection{Weighted KBs and closure construction for finitely many values} \label{sec:closure}

In this section we introduce the notion of weighted $G_n \alc \tip$ ($\L_n \alc \tip$) knowledge base allowing for {\em weighted defeasible inclusions}, namely, typicality inclusions with a real-valued weight, as introduced for  $\cal EL$ in \cite{JELIA2021}. 

A  {\em weighted $G_n \alc \tip$ knowledge base} $K$, over a set ${\cal C}= \{C_1, \ldots, C_k\}$ of distinguished $G_n \alc$ concepts,
is a tuple $\langle  {\cal T}, {\cal T}_{C_1}, \ldots, {\cal T}_{C_k}, {\cal A}  \rangle$, where  ${\cal T}$  is a set of $G_n \alc$ inclusion axioms, 
${\cal A}$ is a set of $G_n \alc$ assertions  
and
${\cal T}_{C_i}=\{(d^i_h,w^i_h)\}$ is a set of all weighted typicality inclusions $d^i_h= \tip(C_i) \sqsubseteq D_{i,h}$ for $C_i$, indexed by $h$, where each inclusion $d^i_h$ has weight $w^i_h$, a real number, and $C_i$ and $D_{i,h}$ are $G_n \alc$ concepts.
As in \cite{JELIA2021}, the typicality operator is assumed to occur only on the left hand side of a weighted typicality inclusion, and we call {\em distinguished concepts}  those concepts $C_i$ occurring on the l.h.s. of some typicality inclusion $\tip(C_i) \sqsubseteq D$.
%Arbitrary $\elbFt$ inclusions and assertions may belong to ${\cal T}$ and ${\cal A}$.
The definition of a weighted $\L_n \alc \tip$ knowledge base is similar.
Let us consider the following example.

\begin{example} \label{exa:Penguin} 
Consider the weighted  $G_n \alc \tip$  knowledge base $K =\langle {\cal T},  {\cal T}_{Bird}, {\cal T}_{Penguin},$  $ {\cal A} \rangle$, over the set of distinguished concepts ${\cal C}=\{\mathit{Bird, Penguin}\}$, with 
 $ {\cal T}$ containing, for instance, the inclusion $\mathit{Black \sqcap Red  \sqsubseteq  \bot \geq 1}$.

\noindent
The weighted TBox ${\cal T}_{Bird} $ 
contains the weighted defeasible inclusions: 

$(d_1)$ $\mathit{\tip(Bird) \sqsubseteq Fly}$, \ \  +20  \ \ \ \ \ \ \ \ \  \ \ \ \ \   $(d_2)$ $\mathit{\tip(Bird) \sqsubseteq  Has\_Wings}$, \ \ +50

$(d_3)$ $\mathit{\tip(Bird) \sqsubseteq   Has\_Feather}$, \ \ +50.

\noindent
and ${\cal T}_{Penguin}$ contains the weighted defeasible inclusions:

$(d_4)$ $\mathit{\tip(Penguin) \sqsubseteq Bird}$, \ \ +100 \ \ \ \ \ \ \ \ \ \ \ $(d_5)$ $\mathit{\tip(Penguin) \sqsubseteq  Fly}$, \ \ - 70  

$(d_6)$ $\mathit{\tip(Penguin) \sqsubseteq Black}$, \ \  +50.

\noindent
I.e., a bird normally has wings, has feathers and flies, but having wings and feather (both with weight 50)  for a bird is more plausible than flying (weight 20), although flying is regarded as being plausible; and so on. 
Given Abox ${\cal A}$ in which Reddy is red, has wings, has feather and flies (all with degree 1) and Opus has wings and feather (with degree 1), is black with degree 0.8 and does not fly, %($\mathit{Fly^I(opus) = 0}$), 
considering the weights of defeasible inclusions, we %may 
expect Reddy to be more typical than Opus as a bird, but less typical %than Opus 
as a penguin. 
\end{example}
\normalcolor

In previous work \cite{JELIA2021} a semantics of a weighted ${\cal EL}$  knowledge bases has been defined through a {\em semantic closure construction}, similar in spirit to Lehmann and Magidor's rational closure \cite{whatdoes}, Lehmann's lexicographic closure \cite{Lehmann95}, and related to c-representations \cite{Kern-Isberner01}, but based on multiple preferences. 
Here, we extend the same construction to weighted $G_n \alc \tip$ ($\L_n \alc \tip$) knowledge bases, 
by considering the notions of  {\em coherent},  {\em faithful} and {\em $\varphi$-coherent} interpretations.
The construction allows a subset of the  $G_n \alc \tip$ ($\L_n \alc \tip$) interpretations to be selected, 
those 
in which the preference relations $<_{C_i}$ {\em faithfully} represent the defeasible part of the knowledge base $K$.

Let ${\cal T}_{C_i}=\{(d^i_h,w^i_h)\}$ be the set of weighted typicality inclusions $d^i_h= \tip(C_i) \sqsubseteq D_{i,h}$ associated to the distinguished concept $C_i$, and let $I=\langle \Delta, \cdot^I \rangle$ be a $G_n \alc \tip$ ($\L_n \alc \tip$)  interpretation.
In the two-valued case, we would associate to each domain element $x \in \Delta$ and each distinguished concept $C_i$, a weight $W_i(x)$ of $x$ wrt $C_i$ in $I$, by {\em summing the weights} of the defeasible inclusions satisfied by $x$.
However, as $I$ is a many-valued interpretation, 
 we  need to consider, for all inclusions $\tip(C_i) \sqsubseteq D_{i,h} \in {\cal T}_{C_i}$,  
the degree of membership of $x$ in $D_{i,h}$. 
For each domain element $x \in \Delta$ and distinguished concept $C_i$, {\em the weight $W_i(x)$ of $x$ wrt $C_i$} in a $G_n \alc \tip$ ($\L_n \alc \tip$)  interpretation $I=\langle \Delta, \cdot^I \rangle$ is:  % defined as:
 \begin{align}\label{weight_fuzzy}
	W_i(x)  & = \left\{\begin{array}{ll}
						 \sum_{h} w_h^i  \; D_{i,h}^I(x) & \mbox{ \ \ \ \  if } C_i^I(x)>0 \\
						- \infty &  \mbox{ \ \ \ \  otherwise }  
					\end{array}\right.
\end{align} 
where $-\infty$ is added at the bottom of ${\mathbb{R}}$.
The value of $W_i(x) $ is $- \infty $ when $x$ is not a $C$-element (i.e., $C_i^I(x)=0$). 
Otherwise, $C_i^I(x) >0$ and the higher is the sum $W_i(x) $, the more typical is the element $x$ relative to the defeasible properties of $C_i$.

\begin{example} \label{exa:penguin2}
Let us consider again Example \ref{exa:Penguin}.
Let $I$ be an $G_n \alc \tip$ interpretation such that $\mathit{Fly^I(red}$- $\mathit{dy)}$ $\mathit{= (Has\_Wings)^I (reddy)= (Has\_Feather)^I (reddy)=1}$ and  
\linebreak  $\mathit{Red^I(reddy) =1 }$, 
and 
$\mathit{Black}^I (reddy)=0$. Suppose further that $\mathit{Fly^I(opus) = 0}$ and $\mathit{ (Has\_Wings)^I (opus)= } $ $ \mathit{=(Has\_Feather)^I }$ $\mathit{(opus)=1 }$ and $\mathit{ Black^I(opus) =0.8}$. % i.e., Opus does not fly, has wings and feather, and is black with degree 0.8. 
Considering the weights of typicality inclusions for $\mathit{Bird}$,  $\mathit{W_{Bird}(reddy)= 20+50+}$ $\mathit{50=120}$ and $\mathit{W_{Bird}(opus) =0+50+50=100}$.
This suggests that Reddy should be more typical as a bird than Opus.
On the other hand, if we suppose that $\mathit{Bird^I(reddy)}$ $=1$ and $\mathit{Bird^I(opus)=0.8}$, then $\mathit{W_{Penguin}}$ $\mathit{(reddy)}$ $ \mathit{= 100-70=30}$ and $\mathit{W_{Penguin}}$ $\mathit{(opus)= }$ $\mathit{0.8 \times 100+0.8 \times 50}$ $\mathit{=120}$, 
and Reddy should be less typical as a penguin than Opus.
 \end{example}

In previous work \cite{JELIA2021} a notion of {\em coherence} is introduced, to force an agreement between the preference relations  $<_{C_i}$ induced by a fuzzy interpretation $I$, for distinguished concepts $C_i$, and the weights $W_i(x)$ computed, for each $x \in \Delta$, from the knowledge base $K$, given the interpretation $I$. %A weaker  {\em faithfulness condition}  (called weak coherence in \cite{arXiv_JELIA2020}) has been considered in \cite{ECSQARU2021}.
In the many-valued case, this leads to the following definition of coherent multipreference model of a weighted  $G_n \alc \tip$ ($\L_n \alc \tip$) knowledge base.

\begin{definition}[Coherent multipreference model of a weighted $G_n \alc \tip$/$\L_n \alc \tip$ KB]\label{fuzzy_cfm-model} 
Let $K=\langle  {\cal T},$ $ {\cal T}_{C_1}, \ldots,$ $ {\cal T}_{C_k}, {\cal A}  \rangle$ be  a weighted $G_n \alc \tip$ ($\L_n \alc \tip$)  knowledge base  over  ${\cal C }$. % = \{C_1, \ldots, C_k\}$. 
A {\em  coherent multipreference model (cm-model)}  of $K$ is  a $G_n \alc \tip$ ($\L_n \alc \tip$)  interpretation $I=\langle \Delta, \cdot^I \rangle$  
s.t.: 
\begin{itemize}
\item
$I$  satisfies  the inclusions in $ {\cal T}$ and the assertions in ${\cal A}$;
\item 
for all $C_i\in {\cal C}$,  the preference {\em $<_{C_i}$   is coherent  to $ {\cal T}_{C_i}$}, that is, for all $x,y \in \Delta$,
\begin{align}\label{coherence_2}
x  <_{C_i}  y & \iff W_i(x) > W_i(y)  
\end{align}
\end{itemize}
\end{definition}
In a similar way, one can define a {\em faithful multipreference model (fm-model) of $K$} by replacing   the {\em coherence} condition (\ref{coherence_2}) with a {\em faithfulness} condition: %(called weak coherence in \cite{arXiv_JELIA2020}):  
for all $x,y \in \Delta$, 
\begin{align} \label{faitfulness}
x  <_{C_i}  y & \Ri W_i(x) > W_i(y) .
\end{align}
The weaker notion of faithfulness allows to define a larger class of multipreference models of a weighted knowledge base, compared to the class of coherent models. 
This allows a larger class of monotone non-decreasing activation functions in neural network models to be captured, whose activation function is monotonically non-decreasing (we refer to  the work by Giordano and Theseider Dupr\'e  \cite{JELIA2021}, %extended version, 
and by Giordano \cite{ECSQARU2021}.%Sec. 7). %\cite{arXiv_JELIA2021}

\section{$\varphi$-coherent models with finitely many values} \label{sec:varphi_coherent_models}

In this section we consider another notion of coherence of a many-valued interpretation $I$ wrt a KB, that we call {\em $\varphi$-coherence}, where $\varphi$ 
is a function  from $\mathbb{R}$ to the interval $[0,1]$, i.e., $\varphi: {\mathbb{R}} \rightarrow [0,1]$. $\varphi$-coherent models have been first introduced  
in the definition of a gradual argumentation semantics \cite{TR_Argumentation}.
%where their relationships with coherent and faithful models are studied.
Let us consider $\varphi$-coherent $G_n \alc \tip$ ($\L_n \alc \tip$) interpretations.
 
 \begin{definition}[$\varphi$-coherence]\label{varphi-coherence} 
 Let $K=\langle  {\cal T},$ $ {\cal T}_{C_1}, \ldots,$ $ {\cal T}_{C_k}, {\cal A}  \rangle$ be  a weighted $G_n \alc \tip$ ($\L_n \alc \tip$) knowledge base, and $\varphi: {\mathbb{R} } \rightarrow [0,1]$.
A $G_n \alc \tip$ ($\L_n \alc \tip$) interpretation $I=\langle \Delta, \cdot^I \rangle$  is {\em $\varphi$-coherent} if, 
for all concepts $C_i \in {\cal C}$ and $x\in \Delta$, 
\begin{align}\label{fi_coherence}
C_i^I(x)= \varphi  (\sum_{h} w_h^i  \; D_{i,h}^I(x)) 
\end{align}
where ${\cal T}_{C_i}=\{(\tip(C_i) \sqsubseteq D_{i,h},w^i_h)\}$ is the set of weighted conditionals  for $C_i$.
A {\em $\varphi$-coherent multipreference model ($\varphi $-coherent model)} of a knowledge base $K$, is defined as a coherent model in Definition \ref{fuzzy_cfm-model}, by replacing the notion of {\em coherence} in condition (\ref{coherence_2}) with the notion of {\em $\varphi$-coherence}  (\ref{fi_coherence}).

\end{definition}

The relationships between the three semantics \cite{TR_Argumentation} extend to the finite many-valued case as follows.

\begin{proposition} \label{prop:phi_coherent_models}
Let $K$ be a weighted $G_n \alc \tip$ ($\L_n \alc \tip$)  knowledge base and $\varphi: {\mathbb{R}} \rightarrow [0,1]$.
(1) if $\varphi$ is a {\em monotonically non-decreasing} function, a $\varphi$-coherent  multipreference model $I$ of $K$ is also a faithful-model of $K$;
(2) if $\varphi$ is a {\em monotonically increasing} function, a $\varphi$-coherent multipreference model $I$ of $K$ is also a coherent-model of $K$. 
%\lgcomm{Prop1 Vale anche per le finitely many-valued logics?}
\end{proposition}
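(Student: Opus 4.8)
The plan is to unwind the definitions and observe that $\varphi$-coherence is essentially an equality-strengthening of the coherence and faithfulness conditions. Fix a $\varphi$-coherent model $I = \langle \Delta, \cdot^I \rangle$ of $K$. By Definition~\ref{varphi-coherence}, for every distinguished concept $C_i \in {\cal C}$ and every $x \in \Delta$ we have $C_i^I(x) = \varphi(\sum_h w_h^i D_{i,h}^I(x))$. Recall the abbreviation from~(\ref{weight_fuzzy}): when $C_i^I(x) > 0$, the quantity $W_i(x)$ is exactly $\sum_h w_h^i D_{i,h}^I(x)$, so on that part of the domain $C_i^I(x) = \varphi(W_i(x))$. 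The induced preference $<_{C_i}$ is defined by~(\ref{def:induced_order}) as $x <_{C_i} y$ iff $C_i^I(x) > C_i^I(y)$, so everything reduces to comparing $\varphi(W_i(x))$ with $\varphi(W_i(y))$, together with the bookkeeping for the case where one of $C_i^I(x), C_i^I(y)$ is $0$.

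For part~(2), assume $\varphi$ is strictly increasing. I must show condition~(\ref{coherence_2}): $x <_{C_i} y \iff W_i(x) > W_i(y)$. First suppose $x <_{C_i} y$, i.e. $C_i^I(x) > C_i^I(y) \geq 0$; since $C_i^I(x) > 0$ we are in the first branch of~(\ref{weight_fuzzy}) for $x$. If $C_i^I(y) = 0$ then $W_i(y) = -\infty < W_i(x)$ and we are done; if $C_i^I(y) > 0$ then both values are genuine reals with $\varphi(W_i(x)) = C_i^I(x) > C_i^I(y) = \varphi(W_i(y))$, and strict monotonicity of $\varphi$ forces $W_i(x) > W_i(y)$. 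Conversely, if $W_i(x) > W_i(y)$ then $W_i(x) \neq -\infty$, so $C_i^I(x) > 0$ and $C_i^I(x) = \varphi(W_i(x))$. If $W_i(y) = -\infty$ then $C_i^I(y) = 0$; since $\varphi(W_i(x)) > 0$ would need to be checked — here is the one subtlety — but note that if $\varphi(W_i(x)) = 0$ as well then $C_i^I(x) = 0$, contradicting $C_i^I(x) > 0$, so $C_i^I(x) > 0 = C_i^I(y)$, giving $x <_{C_i} y$. If $W_i(y)$ is a real, apply $\varphi$ strictly increasing to get $C_i^I(x) = \varphi(W_i(x)) > \varphi(W_i(y)) = C_i^I(y)$, hence $x <_{C_i} y$. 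Since $I$ satisfies ${\cal T}$ and ${\cal A}$ by hypothesis, $I$ is a coherent model.

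For part~(1), assume $\varphi$ is merely non-decreasing; I only need the forward implication~(\ref{faitfulness}): $x <_{C_i} y \Rightarrow W_i(x) > W_i(y)$. Arguing as above, $x <_{C_i} y$ gives $C_i^I(x) > C_i^I(y) \geq 0$ and $C_i^I(x) > 0$. If $C_i^I(y) = 0$, then $W_i(y) = -\infty < W_i(x)$. If $C_i^I(y) > 0$, then $\varphi(W_i(x)) > \varphi(W_i(y))$ with $\varphi$ non-decreasing forces $W_i(x) > W_i(y)$ (if we had $W_i(x) \leq W_i(y)$, monotonicity would give $\varphi(W_i(x)) \leq \varphi(W_i(y))$, a contradiction). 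Hence the faithfulness condition holds and $I$ is a faithful model.

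The argument is essentially routine definition-chasing; the only point that needs care is the treatment of the $-\infty$ value in~(\ref{weight_fuzzy}) and the interaction between ``$C_i^I(x) = 0$'' and ``$\varphi$ evaluated at a weight sum equal to $0$,'' which is why part~(2) genuinely needs \emph{strict} monotonicity rather than merely injectivity on the reals. I expect no real obstacle beyond keeping the boundary case ($C_i^I(x) = 0$ for some $x$) bookkept consistently with the convention that $-\infty$ sits at the bottom of $\mathbb{R}$ extended.
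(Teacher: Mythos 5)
Your proof is correct. The paper itself does not spell out a proof of this proposition (it defers to the cited work on the gradual argumentation semantics), but the argument you give --- unwinding $\varphi$-coherence so that $C_i^I(x)=\varphi(W_i(x))$ whenever $C_i^I(x)>0$, then transferring the comparison through $\varphi$ using non-decreasingness (for faithfulness) or strict monotonicity (for the biconditional of coherence), with separate bookkeeping for the $W_i(\cdot)=-\infty$ case --- is exactly the intended one, and your handling of the boundary case $C_i^I(y)=0$ is the only point that needs care and you get it right.
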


To see that the set of equations defined by (6)  allow to characterize the {\em stationary states} of Multilayer Perceptrons (MLPs), let us consider from  \cite{Haykin99} the model of a {\em neuron} as an information-processing unit in an (artificial) neural network. The basic elements are the following:
%\begin{itemize}
%\item 
(1) a set of {\em synapses} or {\em connecting links}, each one characterized by a {\em weight}. 
We let $x_j$ be the signal at the input of synapse $j$ connected to neuron $i$, and $w_{ij}$ the related synaptic weight;
%\item
(2) the adder for summing the input signals to the neuron, weighted by the respective synapses weights: $\sum^n_{j=1} w_{ij} x_j$;
%\item
(3) an {\em activation function} for limiting the amplitude of the output of the neuron (here, we assume, to the interval $[0,1]$).
A neuron $i$ can be described by the following pair of equations: 
$u_i= \sum^n_{j=1} w_{ij} x_j $
%\; \; \; \; \; \; \; \; 
and $y_i=\varphi(u_i + b_i)$
where  $x_1, \ldots, x_n$ are the input signals and $w_{i1}, \ldots,$ $ w_{in} $ are the weights of neuron $i$; 
$b_i$ is the bias, $\varphi$ the activation function, and $y_i$ is the output signal of neuron $i$.
By adding a new synapse with input $x_0=+1$ and synaptic weight $w_{i0}=b_i$, one can write: 
$u_i= \sum^n_{j=0} w_{ij} x_j $, and  $y_i=\varphi(u_i)$,
where $u_i$ is called the {\em induced local field} of the neuron.

A neural network $\enne$ can then be seen as ``a directed graph consisting of nodes with interconnecting synaptic and activation links"  \cite{Haykin99}.
Nodes in the graph are the neurons (the processing units)  
and the weight $w_{ij}$ on the edge from node $j$ to node $i$ represents the strength of the connection between unit $j$ and unit $i$.

A mapping of a neural network to a conditional KB can be defined in a simple way  \cite{JELIA2021}, by associating a concept name $C_i$ with each unit $i$ in the network and by introducing, for
each synaptic connection from neuron $h$ to neuron $i$ with weight $w_{ih}$, a conditional $\tip(C_i) \sqsubseteq C_h$ with weight $w_h^i=w_{ih}$.
 If we assume that $\varphi$ is the {\em activation function} of {\em all units} in the network $\enne$ and we consider the  infinite-valued fuzzy logic with truth space ${\cal S} = [0,1]$,  then the solutions of equations (\ref{fi_coherence}) characterize the {\em stationary states} of MLPs, where 
$C_i^I(x)$ corresponds to the activation of neuron $i$ for some input stimulus $x$, each $D_{i,h}^I(x)$ corresponds to the input signal $x_h$, %for input stimulus $x$,
and $\sum_{h} w_h^i  \; D_{i,h}^I(x)$ corresponds to the {\em induced local field} of neuron $i$.

Notice that, when the truth space is the finite set ${\cal C}_n$, for $n\geq 1$, the notion of $\varphi$-coherence may fail to characterize all the  stationary states of a network, simply as there may be stationary states such that the activity values of units fall outside ${\cal C}_n$.
In the next section, we will consider an approximation $\varphi_n$ of the function $\varphi$ over ${\cal C}_n$, with the idea to capture an approximated %approximating the network 
behavior of the network  based on the finite many-valued semantics of a weighted conditional KB, and to construct a preferential model for properties verification.

For a weighted $G_n \alc \tip$ ($\L_n \alc \tip$) knowledge base $K$, a notion of {\em coherent/ faithful/$\varphi$-coherent entailment} 
%(for short {\em c/f/$\varphi$c-entailment}) 
can be defined in a natural  way. 
As in the two-valued case \cite{AIJ15}, we restrict our consideration 
 to {\em canonical models}, i.e., models which are large enough to contain 
all the relevant domain elements with their different valuations.
Informally, a canonical $\varphi$-coherent model of $K$ is a $\varphi$-coherent model of $K$ that contains a domain element for each possible valuation of concepts which is present in any $\varphi$-coherent model of $K$. 
Similarly for coherent and faithful models.

\begin{definition}[Canonical coherent/faithful/$\varphi$-coherent model of $K$]
Given a  weighted $G_n \lc \tip$ ($\L _n \lc \tip$) knowledge base $K$, 
$I=(\Delta, \cdot^I)$ is a canonical  coherent/faithful/$\varphi$-coherent model of $K$  if: (i) $I$ is  a coherent/faithful/$\varphi$-coherent model of $K$  and,
(ii) for each coherent/faithful/$\varphi$-coherent model $J=(\Delta^J, \cdot^J)$ of $K$ and each $y \in \Delta^J$,
there is an element $z \in \Delta$ such that $B^I(z)=B^J(y)$, for all concept names $B$ occurring in $K$.
\end{definition}
A result concerning the existence of canonical $\varphi$-coherent  models, for weighted KBs having at least a $\varphi$-coherent  model,  can be found in the supplementary
material for the paper, Appendix A. Let us define entailment.

\begin{definition}[coherent/faithful/$\varphi$-coherent entailment] \label{fm-entailment}
Given a weighted $G_n \alc \tip$ ($\L_n \alc \tip$)  knowledge base $K$,
a $G_n \alc \tip$ ($\L_n \alc \tip$) axiom $E$   is {\em coherently/faithfully/$\varphi$-coherently entailed} from $K$  if, for all  {\em canonical coherent/ faithful/$\varphi$-coherent models} $I=\langle \Delta, \cdot^I \rangle$ of $K$, $I$ satisfies $E$.
\end{definition}

 The properties of faithful entailment in the fuzzy case have been studied by Giordano \cite{ECSQARU2021}.
Faithful entailment is well-behaved: it deals with specificity and irrelevance; it is not subject to inheritance blocking;
it satisfies most KLM properties of a preferential consequence relation \cite{KrausLehmannMagidor:90,whatdoes}, depending on their fuzzy reformulation and  on the chosen combination functions.

In the next section, we restrict our consideration to the boolean fragment $\lc$ of $\alc$ (with neither roles, nor universal nor existential restrictions), which is sufficient to encode MLPs as weighted KBs and to formulate boolean properties of the network. 
We consider the finitely many-valued logics $G_n \lc \tip$ and $\L_n \lc \tip$,
%We prove that the finite model property holds for these fragments and 
and exploit ASP and {\em asprin} for defeasible reasoning in $G_n \lc \tip$ and $\L_n \lc \tip$ under an approximation $\varphi_n$ of $\varphi$.

\section{ASP and {\em asprin} for %defeasible 
	reasoning in  $G_n \lc \tip$ and $\L_n \lc \tip$: $\varphi_n$-coherence and  verification of multi-layer perceptrons}

Given a monotonically non-decreasing function $\varphi: {\mathbb{R}} \rightarrow [0,1]$, and an integer $n>1$, let 
function $\varphi_n: {\mathbb{R}} \rightarrow {\cal C}_n$ 
be defined as follows: 
\begin{align}\label{approx}
	\varphi_n(x) & = \left\{\begin{array}{ll}
						 0 & \mbox{ \ \ \ \  if } \varphi(x) \leq \frac{1}{2n} \\
						 \frac{i}{n} & \mbox{ \ \ \ \  if } \frac{2i -1}{2n} <  \varphi(x)  \leq \frac{2i +1}{2n}, \mbox{ for } 0<i<n \\
						1 &  \mbox{ \ \ \ \  if } \frac{2n -1}{2n} <  \varphi(x) 
					\end{array}\right.
\end{align} 
$\varphi_n(x)$ approximates  $\varphi(x)$ to the nearest value in ${\cal C}_n$.
The notions of {\em $\varphi_n$-coherence}, {\em $\varphi_n$-coherent model},  {\em canonical $\varphi_n$-coherent model}, {\em $\varphi_n$-coherent entailment} can be defined as 
in Definitions \ref{varphi-coherence}  and \ref{fm-entailment}, by replacing $\varphi$ with $\varphi_n$.
The above mentioned result concerning the existence of canonical models  also extends to  canonical $\varphi_n$-coherent models of weighted KBs (see Proposition 3 in the supplementary material for the paper, Appendix A).

In the following, 
we formulate the problem of $\varphi_n$-coherent entailment from a weighted $G_n \lc \tip$ ($\L_n \lc \tip$) knowledge base as a problem of computing preferred answer sets of an ASP program.
Verifying $\varphi_n$-coherent entailment of a typicality inclusion $\tip(C) \sqsubseteq D \; \theta \;\alpha$ from a weighted knowledge base $K$ (a subsumption problem),
would, in principle, require considering all typical $C$-elements in all possible canonical $\varphi_n$-coherent models of $K$, and checking whether they are all instances of $D$ with a degree $d$  such that $d \theta \alpha$.
We show that we can reformulate this problem as a problem of generating  answer sets representing $\varphi_n$-coherent models of the knowledge base,  and then selecting preferred answer sets, where a distinguished domain element $aux_C$ is intended to represent a typical $C$-element. % of a canonical $\varphi_n$-coherent model of the KB. 
For the selection of preferred answer sets, the ones maximizing the degree of membership of $aux_C$ in concept $C$, {\em asprin} \cite{BrewkaAAAI15} is used. 
Our proof method is sound and complete for the computation of $\varphi_n$-coherent  entailment.

Given  a weighted $G_n \lc \tip$ ($\L_n \lc \tip$) knowledge base
$K=\langle  {\cal T},$ $ {\cal T}_{C_1}, \ldots,$ $ {\cal T}_{C_k}, {\cal A} \rangle$,
we let  
$\Pi_{K,n}$ be the representation of $K$ in Datalog, where
$\mathit{val(v)}$ holds for $v$ a value in $\{0,1, \ldots, n\}$, which is intended to represent the value $\frac{v}{n}$ in ${\cal C}_n$;
$\mathit{nom(a)}$, $\mathit{cls(A)}$\footnote{  Uppercase is used here for concept names, to keep a DL-like notation, even though such names are ASP constants.}, 
 %$\mathit{rol(R)}$ 
are used for
$\mathit{a \in N_I}$, $\mathit{A \in N_C}$. We also have $\mathit{nom(auxc)}$\footnote{ Observe that the addition of further auxiliary constants to represent other domain elements in a model, as considered for $\el$ in the two-valued case \cite{iclp2021},
following the approach by Kr\"{o}tzsch in his Datalog materialization calculus \cite{KrotzschJelia2010},
is not needed here as neither existential nor universal restrictions are allowed.}.

Boolean concepts $C \sqcap D$, $C \sqcup D$, $\neg C$ are represented as $and(C',D')$, $or(C',D')$, $neg(C')$, where $C'$ and $D'$ are terms representing concepts $C$ and $D$;
$\mathit{subTyp(C',D',w')} $ represents a defeasible inclusion $( \mathit{T(C) \sqsubseteq D} , w)$, where $w'$ is an integer corresponding to $w \times 10^k$, for $w$ approximated to $k$ decimal places.
The concepts of interest, to be considered for
limiting grounding in the rules introduced later, 
are represented  (1) with assertions $concept(C')$, where $C'$ is the term for boolean concepts $C$ occurring 
in $K$ or in the formula to be verified (see later); (2) with rules implying that subconcepts are also of interest, e.g.:

$\;$ \ \  $ \mathit{
concept(A) \leftarrow concept(and(A,B)).
	} $

$\Pi_{K,n}$ also contains the set of  rules for generating $\varphi_n$-coherent models of $K$.
The valuation is encoded by a set of atoms of the form $\mathit{inst(x,A,v)}$, meaning that $\frac{v}{n} \in C_n$ is the degree of membership of $x$ in $A$.
%$v$ represents a value in $C_n$. 
The rule:

$\;$ \ \  $ \mathit{
1\{inst(X,A, V) : val(V)\}1 \ \leftarrow cls(A), nom(X).
	} $

\noindent	
generates alternative answer sets, corresponding to interpretations of each constant $\mathit{x}$ (either named individuals or $aux_C$), with different values $v$ corresponding to a membership degree $\frac{v}{n} \in {\cal C}_n$  in each atomic concept $A$.  

The valuation of complex boolean concepts $D$  is encoded by introducing a predicate $\mathit{eval(D, X, V)}$ to determine the membership degree $V$ of element $X$ in $D$. A rule is introduced for each boolean operator to encode its semantics. For $G_n \lc \tip$, the $\mathit{eval}$ predicate encodes the semantics of $\sqcap$, $\sqcup$ and $\neg$, based on G\"odel logic t-norm, s-norm and on involutive negation as follows:

$\;$ \ \ $ \mathit{
eval(A, X, V) \leftarrow cls(A), inst(X,A,V)}$.

$\;$ \ \   $ \mathit{
eval(and(A, B), X, V) \leftarrow concept(and(A, B)), eval(A,X,V1), eval(B,X,V1),}$  

$\;$ \ \ \ \ \ \ \ \ \ \ \ \    \ \ \ \ \ \ \ \ \ \ \ \   \ \ \ \ \ \ \ \ \ \ \ \  $ \mathit{min(V1, V2,V).
	} $
	
$\;$ \ \   $ \mathit{
eval(or(A, B), X, V) \leftarrow concept(or(A, B)), eval(A,X,V1), eval(B,X,V1),  }$  

$\;$ \ \ \ \ \ \ \ \ \ \ \ \    \ \ \ \ \ \ \ \ \ \ \ \   \ \ \ \ \ \ \ \ \ \ \ \  $ \mathit{ max(V1, V2,V).
	} $

$\;$ \ \ $ \mathit{
eval(neg( A), X, V) \leftarrow concept(neg (A)), eval(A,X,V1),   V= n-V1.
	} $

\noindent	
where the predicates $\mathit{min}$ and  $\mathit{max}$ are suitably defined.  
A similar evaluation function $\mathit{eval}$ can be defined for \L ukasiewicz combination functions.

To guarantee the satisfiability of $G_n \lc \tip$ axioms (assertions and inclusions) a set of constraints is added. For instance, for the assertion $C(a) \geq \alpha$ 
we add the constraint

$\;$ \ \ $\mathit{\bot \leftarrow eval(C',a,V), V < n\alpha .}$

\noindent
where $\mathit{C'}$ is the term representing concept $C$,
while for a strict $G_n \lc \tip$  inclusion $E \sqsubseteq D \geq \alpha$ we add the constraint

$\;$ \ \  $\mathit{\bot \leftarrow eval(E',X,V1), eval(D',X,V2), V1>V2, V2 < \alpha .}$

\noindent
and similarly for other axioms and for the $\L _n \lc \tip$ case. An answer set represents a $\varphi_n$-coherent interpretation if the following constraint is satisfied:

 $\mathit{\bot \leftarrow nom(X), dcls(Ci), eval(Ci,X,V), weight(X,Ci,W), }$
 
 $\;$ \ \ \ \ \ \ \   $\mathit{ valphi(n,W,V1),V != V1.}$

\noindent
where $\mathit{dcls(Ci)}$ is included in $\Pi_{K,n}$ for each distinguished class $C_i \in {\cal C}$; 
given that the weights $w_h^i$ are approximated to $k$ decimal places, 
argument $W$ for $\mathit{weight}$ corresponds to the integer $n \times W_i(x) \times  10^k$, 
and $valphi(n,W,V1)$ is defined (see below) to correspond to 
$ V1 =  n \times \varphi_n(W_i(x))) = n \times \varphi_n(W/(n \times 10^k)) $  again representing ${\cal C}_n$ with $\{0,1, \ldots, n\}$. 
Predicate $\mathit{weight}$  (for the weighted sum) could, in principle, be defined as follows:

$\;$ \ \  $ \mathit{
weight(X,C,W) \leftarrow dcls(C),nom(X), 
}$

$\;$ \ \ \ \ \ \ \ \ \ \ \ \    \ \ \ \ \ \  
$ \mathit{ 
W = \#sum\{ Wi*V,D : cls(D),eval(D,X,V), subTyp(C,D,Wi) \}.
} $

\noindent
but, for grounding reasons, it can be better defined with a rule for each individual distinguished class; 
such rules can be generated, for each distinguished concept $C_i$, from the set of weighted typicality inclusions $ {\cal T}_{C_i}$.
In particular, given 
${\cal T}_{C_i}=\{(\tip(C_i) \sqsubseteq D_{i,h},w^i_h), h=1,\ldots,k\}$, the following rule is introduced: 

$\;$ \ \  $ \mathit{
	weight(X,Ci',W) \leftarrow nom(X), W = Wi1 * Vi1 + \ldots + Wik * Vik,
}$

$\;$ \ \ \ \ \ \ \ \ \ \ \ \     
$ \mathit{  subTyp(Ci',Di1',Wi1), eval(Di1',X,Vi1), \ldots , }$

$\;$ \ \ \ \ \ \ \ \ \ \ \ \     
$ \mathit{   	subTyp(Ci',Dik',Wik), eval(Dik',X,Vik).   	}$

\noindent where $ \mathit{Ci'}$, $ \mathit{Di1'}$, $\ldots$, $ \mathit{Dik'}$
are the terms representing concepts $ \mathit{Ci}$, $ \mathit{Di1}$, $\ldots$, $ \mathit{Dik}$.

\noindent
Predicate $valphi$ can be defined with rules such as:

$\;$ \ \  $\mathit{valphi(n,W,0) \leftarrow num(W),W < k_1.}$

$\;$ \ \  $\mathit{valphi(n,W,1) \leftarrow num(W),W>= k_1, W < k_2.}$

$\;$ \ \ $\ldots$

$\;$ \ \  $\mathit{valphi(n,W,n) \leftarrow num(W),W > k_{n-1}.}$

\noindent	
where: 

$\;$ \ \  $ \mathit{
	num(W) \leftarrow nom(X),weight(X,C,W),dcls(C).
}$

\noindent
is used for limiting grounding of the previous rules, and $k_1, \ldots, k_{n-1}$ can be precomputed to be:

$k_1 = \lfloor w \rfloor$ where $w$ is such that $\varphi(w/(n \times 10^k)) = 1/2n$,

$k_2 = \lfloor w \rfloor$ where $w$ is such that $\varphi(w/(n \times 10^k)) = 3/2n$,

$\ldots$

$k_{n-1} = \lfloor w \rfloor$ where $w$ is such that $\varphi(w/(n \times 10^k)) = (2n-1)/2n$.

\noindent
The program $\Pi(K,n,C,D,\theta, \alpha)$ associated to the $G_n \lc \tip$ ($\L_n \lc \tip$)  knowledge base $K$ and a typicality subsumption 
$\tip(C) \sqsubseteq D \; \theta \; \alpha$ is composed of two parts, 
$\Pi(K,n,C,$ $D,\theta , \alpha)= \Pi_{K,n} \cup \Pi_{C,D, \theta \alpha}$.
We have already introduced the first one. $\Pi_{C,D,n,\theta,\alpha}$ contains the facts
 $ \mathit{nom(aux_C)}$ and $ \mathit{auxtc(aux_C,C')}$ and the rules:
 
$\;$ \ \  $\mathit{ok \leftarrow  eval(D', aux_C,V), V \theta \alpha n. }$ \ \ \ \ \  \ \ \ \ \  $\mathit{notok \leftarrow  not \;ok .}$

\noindent
where $\mathit{ok}$ is intended to represent that $aux_C$ satisfies the property that its membership degree $V$ in concept $D$ is such that $V \theta  \alpha$ holds. 

Given a query $\tip(C) \sqsubseteq D \; \theta \alpha$, we have to verify that, in all canonical $\varphi_n$-coherent models of the $G_n \lc \tip$ ($\L_n \lc \tip$)  knowledge base, all typical $C$-elements are $D$-elements with a certain degree $v$ (representing $v/n \in {\cal{C}}_n$)  such that $ v  \theta \alpha n$.
This verification is accomplished by generating answer sets corresponding to the $\varphi_n$-coherent models of the KB, and by selecting the preferred ones, in which the distinguished element $aux_C$ represents a typical $C$-element. 

Given two answer sets $S$ and $S'$ of $\Pi(K,n,C,D,\theta , \alpha)$,
%An {\em answer set $S$ is preferred to answer set $S'$} 
{\em  $S$ is preferred to $S'$}  if the membership degree of $aux_C$ in concept $C$ is higher in $S$ than in $S'$, 
that is: if 
$\mathit{eval(C',aux_C,v1)}$ holds in $S$ and $\mathit{eval(C',aux_C,v2)}$ holds in $S'$, then $v1 > v2$.

This condition %is used to define a preference specification in {\em asprin}, and 
is encoded directly into a preference program for {\em asprin} as follows.
One such program requires defining when an answer set $S$ is preferred to $S'$ according to a preference $P$ (optimal solutions wrt such a preference can then be required with an $ \mathit{\#optimize}$ directive).
This is done by defining a predicate $better(P)$ for the case where $P$ is of the type being defined,
using predicates $holds$ and $holds'$ to check whether atoms hold in $S$ and $S'$, respectively. 
In this case the preference program, defining a ``concept wise'' preference, is simply as follows:  

\begin{tabbing}
	$ \mathit{\#program ~ preference(cwise).}$
	\\
	
	$ \mathit{
		better(P) \leftarrow
		preference(P,cwise),
		holds(auxtc(auxc,C)), 
	    betterwrt(C).} $ \\
	
	$ \mathit{
		betterwrt(C) \leftarrow
		holds(eval(C,auxc,V1)),holds'(eval(C,auxc,V2)),V1>V2.
		} $
	
\end{tabbing}

The query $\tip(C) \sqsubseteq D \; \theta \; \alpha$ is entailed from the knowledge base $K$ if, in all (maximally) preferred answer sets, $aux_C$ is an instance of concept $D$ with a membership degree $v$ (representing $v/n \in {\cal{C}}_n$) such that $v \theta \alpha n$ holds; i.e., if $\mathit{ok}$ holds in all preferred answer sets, or,
equivalently, $\mathit{notok}$ does not hold in any of them.
In fact, we can prove that this corresponds to verifying that  $D$ is satisfied in all $<_C$-minimal $C$-elements in all canonical $\varphi_n$-coherent models of the knowledge base:

\begin{proposition}  \label{AS to pref-models}
Given a $G_n \lc \tip$ ($\L_n \lc \tip$)  knowledge base $K$, 
the query $\tip(C) \sqsubseteq D \; \theta \; \alpha$ is falsified in %some minimal $C$-elements in all canonical $\varphi_n$-coherent models
some canonical $\varphi_n$-coherent model of $K$ if and only if there is a preferred answer set $S$ of the program $\Pi(K,C,D,n,\theta,\alpha)$ containing $eval(D',aux_C,v)$ such that $v \theta \alpha n$ does not hold (and then containing $\mathit{notok}$).
\end{proposition}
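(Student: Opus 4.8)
\emph{Sketch of the intended argument.} The plan is to set up a tight correspondence between the answer sets of the program $\Pi(K,n,C,D,\theta,\alpha)=\Pi_{K,n}\cup\Pi_{C,D,\theta,\alpha}$ and the $\varphi_n$-coherent interpretations of $K$ built over the finite domain $\Delta=N_I\cup\{aux_C\}$, and then to match the \emph{preferred} answer sets with the canonical $\varphi_n$-coherent models in which $aux_C$ is a $<_C$-minimal $C$-element. For the first part, given an answer set $S$ let $I_S$ be the interpretation with domain $\Delta$, $a^{I_S}=a$, and $A^{I_S}(x)=v/n$ iff $\mathit{inst}(x,A,v)\in S$. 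By structural induction on the (finitely many) concepts $E$ for which $\mathit{concept}(E')$ is derivable, one shows $\mathit{eval}(E',x,v)\in S$ iff $E^{I_S}(x)=v/n$: the base case is the \emph{inst} rule and the inductive cases are the \emph{eval} rules for $\sqcap,\sqcup,\neg$, which reproduce exactly the G\"odel (resp.\ \L ukasiewicz) combination functions on $\{0,\dots,n\}$. From this, \emph{weight} computes $\mathit{weight}(x,C_i',w)\in S$ iff $w=n\cdot 10^k\sum_h w_h^i\,D_{i,h}^{I_S}(x)$, and \emph{valphi}, with the precomputed thresholds $k_1,\dots,k_{n-1}$, computes $\mathit{valphi}(n,w,v)\in S$ iff $v=n\cdot\varphi_n(w/(n\cdot 10^k))$. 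Hence the constraints of $\Pi_{K,n}$ are satisfied by $S$ exactly when $I_S$ satisfies the axioms of ${\cal T}$ and ${\cal A}$ and $C_i^{I_S}(x)=\varphi_n(\sum_h w_h^i\,D_{i,h}^{I_S}(x))$ for every distinguished $C_i$ and every $x$, i.e.\ exactly when $I_S$ is a $\varphi_n$-coherent model of $K$; moreover $\mathit{ok}\in S$ iff $D^{I_S}(aux_C)\;\theta\;\alpha$. Since the only choice points of the program are the \emph{inst} atoms (the rest being deterministic, with $\mathit{ok}/\mathit{notok}$ stratified), $S\mapsto I_S$ is onto this class of $\varphi_n$-coherent models, and $\Pi(K,n,C,D,\theta,\alpha)$ has an answer set iff $K$ has a $\varphi_n$-coherent model.

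For the second part I would use that $\lc$ has neither roles nor restrictions, so that $\varphi_n$-coherence and satisfaction of the ($\geq$-)axioms of ${\cal T}$ are conditions on the valuation of a single element, while ${\cal A}$ constrains only named individuals, never $aux_C$. Writing $\mathcal{V}$ for the set of valuations of the concept names of $K$ that satisfy ${\cal T}$ pointwise and are $\varphi_n$-coherent, it follows that (i) as $S$ ranges over answer sets the valuation of $aux_C$ ranges over all of $\mathcal{V}$ (keep the named individuals' valuations from any fixed $\varphi_n$-coherent model of $K$ and let $aux_C$ take any $\pi\in\mathcal{V}$), and (ii) the \emph{asprin} preference \texttt{cwise} ranks $S$ above $S'$ exactly when $C^{I_S}(aux_C)>C^{I_{S'}}(aux_C)$, so the preferred answer sets are precisely those with $C^{I_S}(aux_C)=\max_{\pi\in\mathcal{V}}\pi(C)$. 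On the model side, by the canonical-model existence result of Appendix~A, whenever $K$ has a $\varphi_n$-coherent model it has a canonical one, and by the definition of canonical model such an $I$ realizes exactly the valuations in $\mathcal{V}$ (each $\pi\in\mathcal{V}$ is realized in some $\varphi_n$-coherent model of $K$ --- add a fresh $\pi$-valued element to any model --- and, conversely, $\varphi_n$-coherence and satisfaction of ${\cal T}$ force every realized valuation into $\mathcal{V}$). Hence in a canonical model the $<_C$-minimal $C$-elements are exactly those realizing a $\pi\in\mathcal{V}$ with $\pi(C)=\max\{\pi'(C)\mid\pi'\in\mathcal{V},\ \pi'(C)>0\}$, and their $D$-degrees form the set $\{\pi(D)\mid\pi\in\mathcal{V},\ \pi(C)=\max\{\pi'(C)\mid\pi'\in\mathcal{V},\ \pi'(C)>0\}\}$.

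Putting the pieces together gives the proposition by a chain of equivalences. Reading ``$\tip(C)\sqsubseteq D\;\theta\;\alpha$ is falsified in a canonical $\varphi_n$-coherent model of $K$'' as in the paper's own gloss, namely ``some $<_C$-minimal $C$-element of that model has a $D$-degree $d$ with $\neg(d\;\theta\;\alpha)$'', the left-hand side holds iff some $\pi\in\mathcal{V}$ with $\pi(C)$ positive and maximal satisfies $\neg(\pi(D)\;\theta\;\alpha)$; by (i) and (ii) this is iff some preferred answer set $S$ contains $\mathit{eval}(D',aux_C,v)$ with $\neg(v\;\theta\;\alpha n)$, i.e.\ contains $\mathit{notok}$. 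If $K$ has no $\varphi_n$-coherent model, both sides are vacuously false, since there are then neither canonical models nor answer sets.

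I expect the main obstacle to be the degenerate case in which no $\pi\in\mathcal{V}$ gives $C$ a positive degree: then a canonical model has no typical $C$-element, so the query holds vacuously (no $<_C$-minimal $C$-element violates it), yet the \emph{asprin} optimum is the whole family of answer sets tied at $C$-degree $0$, some of which may carry $\mathit{notok}$. The equivalence therefore has to be read with ``preferred'' strengthened to also require $C^{I_S}(aux_C)>0$ (so that $aux_C$ genuinely represents a typical $C$-element), or the program has to be guarded accordingly; establishing that this guarded notion still corresponds to $<_C$-minimality in canonical models is the delicate point. The remaining points are routine but fiddly: the floor/threshold arithmetic behind the correctness of \emph{valphi}, which relies on the monotonicity of $\varphi$ together with the nearest-value rounding rule~(\ref{approx}); and, for $\theta\in\{\leq,<\}$, keeping track that ``falsified'' is taken in the universal ``for all $<_C$-minimal $C$-elements'' sense used throughout the paper rather than the literal infimum-of-implications reading, and that the corresponding $\leq/<$-axioms of ${\cal T}$ are handled by the same domain of constants with ``for all'' replaced by ``for some'' where appropriate.
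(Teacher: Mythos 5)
Your proposal is correct in substance and uses the same three ingredients as the paper's proof --- a soundness/completeness correspondence between answer sets and $\varphi_n$-coherent interpretations, the existence of canonical $\varphi_n$-coherent models from Appendix~A, and the matching of the \emph{asprin} preference with $<_C$-minimality --- but it packages them differently. The paper proves a two-directional lemma relating \emph{individual} answer sets to \emph{individual} models (Lemma~2 in Appendix~B) and then establishes each direction of the proposition by contradiction: e.g.\ a non-preferred $S$ would yield, via the lemma and canonicity, an element strictly $<_C$-below the supposedly minimal one. You instead exploit the role-freeness of $\lc$ to reduce everything to the set $\mathcal{V}$ of admissible single-element valuations, and characterize \emph{both} the preferred answer sets and the $<_C$-minimal $C$-elements of a canonical model as those realizing $\max_{\pi\in\mathcal{V}}\pi(C)$. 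This is a legitimate and arguably more transparent route (it makes explicit what a canonical model looks like, rather than using the Appendix~A construction as a black box), at the cost of one point you only gesture at: TBox axioms with $\theta\in\{\leq,<\}$ are \emph{not} pointwise conditions on a single valuation (they are existential over the domain), so $\mathcal{V}$ is not well defined in their presence and your "admissible valuation" decomposition needs an explicit restriction to $\geq/>$ axioms or a separate treatment; the paper's element-by-element lemma sidesteps this because it never needs admissibility of a valuation in isolation.

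Your flagged "degenerate case" is a genuine observation rather than a defect of your argument: if every $\varphi_n$-coherent model assigns $C$ degree $0$ everywhere, then $\mathit{min}_{<_C}(C^I_{>0})=\emptyset$, the query is vacuously satisfied, yet all answer sets are tied as preferred at degree $0$ and some may contain $\mathit{notok}$. The paper's own proof of direction~(2) silently assumes that failure of $y\in \mathit{min}_{<_C}(C^I_{>0})$ can only come from the existence of some $z<_C y$, overlooking the case $C^I(y)=0$; so the guard you propose (requiring $C^{I_S}(aux_C)>0$ in preferred answer sets, or excluding this case by hypothesis) is exactly what is needed to make the stated equivalence literally true.
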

The proof can be found in the supplementary material for the paper, Appendix B. 
It exploits the existence of $\varphi_n$-coherent canonical models, for KBs having a $\varphi_n$-coherent model
(Proposition \ref{prop:existence_canonical_model} in Appendix A). 
Appendix B also contains a proof %that $\varphi_n$-coherent entailment is in  $\Pi^p_2$. 
of the following upper bound on the complexity of $\varphi_n$-coherent entailment.

\begin{proposition}   \label{prop:upper bound}
$\varphi_n$-coherent entailment from a weighted  $G_n \lc \tip$ ($\L_n \lc \tip$) knowledge base is in  $\Pi^p_2$.
\end{proposition}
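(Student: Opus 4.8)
The goal is to show that $\varphi_n$-coherent entailment from a weighted $G_n \lc \tip$ ($\L_n \lc \tip$) knowledge base lies in $\Pi^p_2$. The natural route is to use the soundness and completeness of the ASP encoding established in Proposition~\ref{AS to pref-models}: a query $\tip(C) \sqsubseteq D \; \theta \; \alpha$ is \emph{not} entailed iff there is a preferred answer set $S$ of $\Pi(K,n,C,D,\theta,\alpha)$ containing $eval(D',aux_C,v)$ with $v \theta \alpha n$ failing. So the complement of the entailment problem is: guess an answer set $S$, verify it is a preferred (i.e.\ optimal-with-respect-to-the-\emph{asprin}-preference) answer set, and check that it falsifies the query. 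I would therefore first argue that this complement problem is in $\Sigma^p_2$, and then conclude $\varphi_n$-coherent entailment is in $\Pi^p_2$ by closure under complement.

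\textbf{Key steps.} First I would observe that the program $\Pi(K,n,C,D,\theta,\alpha)$ is a normal (or disjunction-free, stratified-modulo-the-single-choice-rule) ASP program whose size is polynomial in the size of $K$, in $n$, and in the number $k$ of decimal places used to approximate the weights: the only ``choice'' is the cardinality rule $1\{inst(X,A,V):val(V)\}1$, the $eval$ rules deterministically compute the truth values of complex concepts (G\"odel or \L ukasiewicz combination functions are polynomial-time computable), the $weight$ rules compute a bounded-integer weighted sum, $valphi$ is a lookup against precomputed thresholds $k_1,\dots,k_{n-1}$, and the constraints enforcing KB-satisfiability and $\varphi_n$-coherence are polynomial in number and size. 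Hence deciding whether a given candidate $S$ is an answer set of $\Pi$ is in $P$ (it is a normal program; answer-set checking for normal programs is polynomial), and guessing $S$ plus checking it is in NP. Second, for the \emph{preference} layer: $S$ is preferred iff there is no answer set $S'$ that is strictly better, i.e.\ with a strictly higher value $v$ for $eval(C',aux_C,\cdot)$; checking ``no better $S'$ exists'' is a coNP test (universally quantify over candidate $S'$, reject if $S'$ is an answer set with larger $aux_C$-degree in $C$). Third, I would assemble these: the complement of entailment is ``$\exists S$ such that ($S$ is an answer set) $\wedge$ ($S$ is preferred) $\wedge$ ($S$ falsifies the query)'', which is an $\exists$ (polynomial witness) followed by a coNP check, i.e.\ $\Sigma^p_2$. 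Therefore $\varphi_n$-coherent entailment is in $\Pi^p_2$.

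\textbf{Main obstacle.} The delicate point is controlling the \emph{size of the witness} and the \emph{cost of the preference check}. An answer set $S$ names $inst(x,A,v)$, $eval(\cdot)$ and $weight(\cdot)$ atoms over the Herbrand base; I must confirm that, because $\lc$ has no roles and no existential/universal restrictions, the number of relevant domain elements is bounded (the named individuals plus the single $aux_C$), the number of relevant concept subterms is polynomial in $K$, and each membership value lies in the fixed finite set $\{0,\dots,n\}$ --- so $S$ has polynomial size and the weighted sums are integers of polynomially many bits (using the $\times 10^k$ scaling and the bound on the $w_h^i$). Equally, I should make sure the \emph{asprin}-preference semantics reduces, for the ``cwise'' preference used here, exactly to the simple comparison ``$v1>v2$ on $eval(C',aux_C,\cdot)$'', so that the ``is-preferred'' test is genuinely a single coNP query and does not hide a deeper level of the polynomial hierarchy; this follows because the preference program is a fixed, small, stratified program. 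Once these boundedness facts are in place, the $\Sigma^p_2$/$\Pi^p_2$ classification is routine; I would relegate the detailed bookkeeping (exact polynomial bounds on grounding, bit-lengths of the weighted sums, and the equivalence between the abstract preference on $\varphi_n$-coherent canonical models and the \emph{asprin} ``cwise'' preference) to the appendix, as the paper already indicates.
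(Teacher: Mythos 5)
Your proposal is correct and follows essentially the same route as the paper's proof: reduce to the complement problem via Proposition~\ref{AS to pref-models}, guess a polynomial-size candidate answer set $S$ falsifying the query, check it is an answer set in polynomial time, and use an NP-oracle (equivalently, your coNP test) to verify that no strictly better answer set $S'$ exists, yielding $\Sigma^p_2$ for the complement and hence $\Pi^p_2$ for entailment. Your additional bookkeeping on the polynomial bound of the grounding and the bit-length of the scaled weights is exactly the detail the paper leaves implicit.
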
 
% by exploiting  the existence result of $\varphi_n$-coherent models from Appendix A (Corollary \ref{prop:existence_canonical_model}).

As a proof of concept, the approach has been experimented for the weighted $G_n \lc \tip$ KBs corresponding  to two of the trained multilayer feedforward network for the MONK's problems 
\cite{monk}, namely, the network for
problem 1 and the second network for problem 3 . 
The networks have 17 non-independent binary inputs, corresponding to values of 6 inputs having 2 to 4 possible values; such inputs are features of a robot, e.g., head shape and body shape being round, square or octagon, and jacket color being red, yellow, green or blue. The network for problem 1 (Figure \ref{networkmonk1}) has 3 hidden units ($h1,h2,h3$) and an output unit ($o$);
the one for problem 3 has 2 hidden units.

\begin{figure}[t]
	\centering
	\includegraphics[width=0.8\textwidth]{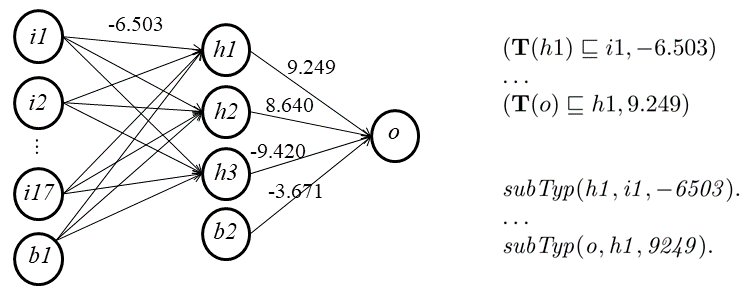}
	\caption{ The network for MONK's problem 1, with some of the weights after training (using 3 decimal digits), two of the corresponding typicality inclusions and their ASP representation.\label{networkmonk1} }
	
\end{figure}

In the two problems, the trained networks learned to classify inputs satisfying two formulae, respectively, $F1$ and $F3$, which are boolean combinations of the inputs.
In particular, $F1$ is $\mathit{jacket\_color\_red \ or \ head\_shape = body\_shape}$
and, in terms of the classes $\mathit{i1, \ldots , i17}$ corresponding to the binary inputs, it is:

$\;$ \ \ \ \ $\mathit{F1 \equiv i12  \sqcup  (i1 \sqcap i4) \sqcup  (i2 \sqcap i5) \sqcup  (i3 \sqcap i6) }$

\noindent
($\mathit{i12}$ is $\mathit{jacket\_color\_red}$, $\mathit{i1}$ is $\mathit{head\_shape\_round}$, $\mathit{i4}$ is $\mathit{body\_shape\_round}$, etc.).

The approach described above has been applied, using values $0$ and $1$ as possible values for classes associated with input nodes,
rather than all values in ${\cal C}_n$.
The networks are feedforward, then for a choice of values for input nodes,
there is only one choice of values in ${\cal C}_n$ for non-input nodes satisfying the constraint for 
$\varphi_n$-coherent interpretations (then the number of answer sets of $\Pi(K,C,D,n,\theta,\alpha)$ is given by the possible combinations of input values and does not depend on $n$).

For the trained network for problem 1, 
for, e.g., $n=5$, the formula
$\tip(o) \sqsubseteq F1 \geq 1$ can be verified; 
$o$ is the concept name associated with the output unit.
That is, the  $G_5 \lc \tip$ knowledge base entails that the typical $o$-elements satisfy $F1$.
The formula can also be verified for $n=1,3,9$  with minor variations on the running time (all below 10 s).
This result is explainable (also for n=1), as an input was classified by the network as class member if the output was $\geq 0.5$
and, for problem 1, the network learned the concept with 100\% accuracy.

Stronger variants of $F1$ have also been considered, to check that the network learned $F1$ but not such variants.
For the following variants with one less disjunct:

$\;$ \ \ \ \ $\mathit{F1' \equiv  i12  \sqcup  (i1 \sqcap i4) \sqcup  (i2 \sqcap i5) } \ \ \ \ \ \ \ \ $
$\mathit{F1'' \equiv (i1 \sqcap i4) \sqcup  (i2 \sqcap i5) \sqcup  (i3 \sqcap i6) }$

\noindent
the formulae $\tip(o) \sqsubseteq F1' \geq 1$ and $\tip(o) \sqsubseteq F1'' \geq 1$ are indeed not entailed for $n=1,3,5,9$.

An important issue in analysing a trained network is also associating a meaning to hidden nodes.
The following formulae have been verified for $n=1,3,5,9$ for hidden nodes  $\mathit{h1,h2,h3}$:

$\;$ \ \ \ \ $\tip(h1) \sqsubseteq i12  \sqcup  (\neg i1 \sqcap \neg i4) \geq 1$ 

$\;$ \ \ \ \  $\tip(h2) \sqsubseteq i12  \sqcup  (\neg i3 \sqcap \neg i6) \geq 1$ 

$\;$ \ \ \ \ $\tip(h3) \sqsubseteq \neg i12  \sqcup  (i2 \sqcup i5) \geq 1$

In problem 3, there was noise (some misclassifications) in the training set. Then the accuracy of the trained network is not 100\%.
However, the trained network produces no false positives.
Therefore, the formula $\tip(o) \sqsubseteq F3 \geq 1$ can be verified for $n=1,3,5,9$, 
where  $F3$ is $\mathit{(jacket\_color\_red \ and \ holding\_sword) \ or \ (not \ jacket\_color\_blue \ and}$  
 $\mathit{ \ not \ body\_shape\_octagon)}$.    
Since there are false negatives,  the formula $\tip(\neg o) \sqsubseteq \neg F3 \geq 1$ is not entailed for $n=1$ but, for instance, it is for $n=5$.

\normalcolor

\section{Conclusions}

The ``concept-wise'' multipreference semantics (both in the two-valued and in the fuzzy case) has recently been proposed as a logical semantics of  MultiLayer Perceptrons (MLPs) \cite{JELIA2021}. In this paper we consider weighted conditional $\alc$ knowledge bases  in the finitely many-valued case, under a coherent, a faithful and a $\varphi$-coherent semantics, the last one being suitable to characterize the stationary states of MLPs. 
For the boolean fragment $\lc$ of $\alc$ we exploit  ASP and {\em asprin}  \cite{BrewkaAAAI15} for reasoning under $\varphi$-coherent entailment, by  restricting to canonical models of the knowledge base. 
%From existence of $\varphi$-coherent canonical models for weighted KBs having at least a $\varphi$-coherent model, 
We have proven soundness and completeness of ASP encoding for the finitely many-valued case and provided an upper complexity bound.
As a proof of concept, we have experimented the proposed approach  for checking properties of some trained neural networks for the MONK's problems \cite{monk}.

Undecidability results for fuzzy DLs with general inclusion axioms 
\cite{CeramiStraccia2011,BorgwardtPenaloza12} 
motivate the investigation of many-valued approximations of fuzzy multipreference entailment. 
 The choice of $\lc$ is motivated by the fact it is sufficient to encode a  neural network as a weighted KB as well as to formulate boolean properties of the network. 
This work is a first step towards the definition of proof methods for reasoning from weighted KBs under a finitely many-valued 
preferential semantics %\cite{JELIA2021,ECSQARU2021} %for description logics more expressive than $\lc$
 in more expressive or lightweight DLs. % (including $\alc$ and $\el$).  
 For $\el$, the two-valued case has been studied in previous work \cite{iclp2021}.
 
%\color{red}
The encoding of a neural network as a conditional knowledge base opens the possibility of combining  empirical knowledge with elicited knowledge, e.g.,
in the form of strict inclusions and definitions.
%\normalcolor
Much work has been devoted, in recent years, to the combination 
of neural networks and symbolic reasoning (see the survey by Lamb et al. \cite{GarcezGori2020}), leading to the definition of new computational models
and to extensions of logic programming languages
with neural predicates. 
The relationships between normal logic programs and connectionist network have been investigated by Garcez and Gabbay \cite{CLIP99} %,GarcezBG01}
and by Hitzler et al. \cite{HitzlerJAL04}.
A correspondence between neural networks and gradual argumentation semantics has been recently investigated by Potyka  \cite{PotykaAAAI21}
by studying the semantic properties and the convergence conditions of a MLP-based bipolar semantics.  %
The correspondence between neural network models and fuzzy systems has been first investigated by Kosko in his seminal work  \cite{Kosko92}.
A fuzzy extension of preferential logics has been studied by Casini and Straccia \cite{CasiniStraccia13_fuzzyRC} based on Rational Closure.

While using preferential logic for the verification of properties of neural networks is a general (model agnostic) approach, first proposed for 
SOMs \cite{CILC2020,JLC2022},
whether it is possible to extend the logical encoding of MLPs as weighted conditional KBs  to other network models 
is a subject for future investigation. The development of a temporal extension of weighted conditional KBs to capture the transient behavior of MLPs is also an interesting direction to extend this work.

\medskip

{\bf Acknowledgement:} 
This research is partially supported by INDAM-GNCS Project 2020.

%\bibliography{biblioMultipreferenzeFuzzyProbNN3shortShort}

\newpage
%\bigskip

\begin{center}
{\Large \bf Appendix} %Supplementary material}
\end{center}

\begin{appendix}

\section{Existence of canonical $\varphi$-coherent/$\varphi_n$-coherent models}

For canonical $\varphi$-coherent and $\varphi_n$-coherent models, we can prove the following result.

\begin{proposition} 
\label{prop:existence_canonical_model}
A  weighted $G_n \lc \tip$ ($\L _n \lc \tip$) knowledge base $K$ has a canonical $\varphi$-coherent ($\varphi_n$-coherent) model, if it has a $\varphi$-coherent ($\varphi_n$-coherent) model.
\end{proposition}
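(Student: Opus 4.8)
The plan is to exploit the fact that, in the boolean fragment $\lc$ (no roles, no quantifiers) with the finite truth space ${\cal C}_n$, an interpretation decomposes into finitely many ``local'' valuations. Let $N$ be the (finite) set of concept names occurring in $K$. By an easy induction on the structure of concepts, for any $\tip$-free $\lc$ concept $D$ built from names in $N$ the value $D^I(x)$ depends only on the tuple $(A^I(x))_{A\in N}$, since the combination functions $\otimes,\oplus,\ominus$ are applied pointwise; in particular this holds for all concepts occurring in ${\cal T}$ and ${\cal A}$ and for all the $C_i$ and $D_{i,h}$ of $K$. Let $V$ be the set of all functions $v\colon N\to{\cal C}_n$ such that $v(B)=B^J(y)$ for all $B\in N$, for some $\varphi$-coherent model $J=\langle\Delta^J,\cdot^J\rangle$ of $K$ and some $y\in\Delta^J$. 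Since $N$ and ${\cal C}_n$ are finite, $V$ is finite; and $V\neq\emptyset$ because, by hypothesis, $K$ has a $\varphi$-coherent model, whose domain is nonempty. Fix one such model $J_0$.

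I would then define the candidate canonical model $I=\langle\Delta^I,\cdot^I\rangle$ by $\Delta^I=V$ (one domain element per realized valuation), $B^I(v)=v(B)$ for $B\in N$ and $B^I(v)=0$ for the other concept names, $a^I=v_a$ for each $a\in N_I$ occurring in $K$, where $v_a$ is the restriction to $N$ of the valuation of $a$ in $J_0$ (so $v_a\in V$), and $a^I$ arbitrary for the remaining individual names; typicality concepts are interpreted by (\ref{eq:interpr_typicality}), and the induced preferences $<_{C_i}$ on $\Delta^I$ are automatically modular and well-founded, as for any finitely many-valued interpretation over ${\cal C}_n$. Next I would verify that $I$ is a $\varphi$-coherent model of $K$: for $v\in\Delta^I$, pick $J$ and $y$ witnessing $v\in V$; by locality $D^I(v)=D^J(y)$ for every $\tip$-free concept $D$ on names in $N$, so (i) the $\varphi$-coherence equation $C_i^I(v)=\varphi(\sum_h w^i_h\,D_{i,h}^I(v))$ holds at $v$ because it holds at $y$ in $J$; (ii) for each strict inclusion $C\sqsubseteq D\;\theta\;\alpha$ of ${\cal T}$ the local value $C^I(v)\rhd D^I(v)$ inherits whatever $C^J(y)\rhd D^J(y)$ satisfies, and since ${\cal C}_n$ is finite we have $(C\sqsubseteq D)^I=\min_{x\in\Delta^I}(C^I(x)\rhd D^I(x))$, so for $\theta\in\{\geq,>\}$ the constraint survives (every element of $\Delta^I$ respects it) and for $\theta\in\{\leq,<\}$ it is witnessed by the image of a witness from $J_0$; (iii) every assertion of ${\cal A}$ concerns named individuals of $K$ and its satisfaction depends only on their $N$-valuations, which coincide with those in $J_0$. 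Finally condition (ii) of the definition of canonical model is immediate: for any $\varphi$-coherent model $J$ of $K$ and any $y\in\Delta^J$, the function $v$ with $v(B)=B^J(y)$ for $B\in N$ lies in $V=\Delta^I$ and satisfies $B^I(v)=B^J(y)$ for all $B\in N$.

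The only delicate point — more a matter of care than a real difficulty — is step (ii): enlarging the domain could a priori break the satisfaction of strict inclusions, which are evaluated through an infimum. This is controlled precisely by the finiteness of ${\cal C}_n$ together with the locality of $\lc$-concept evaluation, which guarantees that each newly added domain element individually satisfies every inclusion of the model from which it was extracted. This locality is exactly what fails for full $\alc$, where existential/universal restrictions make the value of a concept at $x$ depend on other domain elements; there one would need a richer construction, adding auxiliary domain elements in the spirit of the two-valued $\el$ case \cite{iclp2021,KrotzschJelia2010}. The argument for the $\varphi_n$-coherent case is verbatim the same, with $\varphi_n$ in place of $\varphi$ (and with the additional comfort that $\varphi_n$ already takes values in ${\cal C}_n$).
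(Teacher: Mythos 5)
Your proof is correct and follows essentially the same route as the paper's: both constructions collect, for each valuation over the concept names of $K$ that is realized in \emph{some} $\varphi$-coherent model, a domain element realizing it, and both rely on the locality of $\lc$-concept evaluation (no roles) plus the finiteness of ${\cal C}_n$ to transfer satisfaction of the axioms and of the $\varphi$-coherence equations pointwise from the witnessing models. The only difference is presentational — the paper extends the domain of a fixed model $I_0$ by adding one new element per missing valuation, whereas you take the set of realized valuations itself as the domain — and your explicit handling of the $\theta\in\{\leq,<\}$ case of the infimum is, if anything, slightly more careful than the paper's.
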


\begin{proof}  [{sketch}]
We prove the result for $\varphi$-coherent models of a weighted $G_n \lc \tip$ ($\L _n \lc \tip$) knowledge base $K$. The proof for $\varphi_n$-coherent models is the same.

Given a  weighted $G_n \lc \tip$ ($\L _n \lc \tip$) knowledge base $K=\langle  {\cal T},$ $ {\cal T}_{C_1}, \ldots,$ $ {\cal T}_{C_k}, {\cal A}  \rangle$, let  $I_0 = \la \Delta_0, \cdot^{I_0}\ra$ be a $\varphi$-coherent model of $K$.
A canonical $\varphi$-coherent  model for $K$ can be constructed starting from the model $I_0$ as follows.

First, let $\lingconc$ be the set of all concept names $B \in  N_C$ occurring in $K$.  The set $\lingconc$ is finite.
Considering the finitely many concept names $B$  in $\lingconc$ and the finitely many truth degrees in ${\cal C}_n=\{0,\frac{1}{n}, \ldots, \frac{n-1}{n},1\}$,
there are finitely many valuations $e$ assigning a membership degree in ${\cal C}_n$ to each concept  name $B$ in $\lingconc$, i.e., such that $e(B) \in {\cal C}_n$, for each $B \in \lingconc$.

Let us call  $e_1, \ldots, e_k$ all such possible valuations over $\lingconc$.
Starting from the $\varphi$-coherent model $I_0$ of $K$, we extend the domain $\Delta_0$ by possibly introducing new domain elements $x_i$, one for each valuation $e_i$, provided valuation $e_i$ is present in some $\varphi$-coherent model of $K$, but not in $I_0$.

We say that valuation {\em $e_i$ is present in an interpretation $I= \la \Delta, \cdot^{I}\ra$} of $K$ if there is a domain element $x \in \Delta$ such that 
$B^I(x)=e_i(B)$, for all concept names $B \in \lingconc$.

We say that a valuation {\em $e_i$ is missing in $I_0$ for $K$} if it is present in some $\varphi$-coherent model $I$ of $K$, but it is not present in $I_0$.

Let us define a new interpretation $I^* = \la \Delta^*, \cdot^{I^*}\ra$ with  domain
$$\Delta^*= \Delta_0 \cup  \{x_i \mid \mbox{ valuation $e_i$  is missing in $I_0$ } \}$$
$\Delta^*$ contains a new element $x_i$ for each valuation $e_i$ which is missing in $I_0$.

The interpretation of individual names in $I^*$ remains the same as in $I_0$.
The interpretation of concepts in $I^*$ is defined as follows:

- $B^{I^*}(x)= B^{I_0}(x)$ for all $x \in \Delta_0$, for all $B \in N_I$;

- $B^{I^*}(x_i)= e_i(B)$, for all $B \in \lingconc$;

- $B^{I^*}(x_i)= B^{I_0}(z)$, for all $B \in N_C$ s.t. $B \not \in \lingconc$,

\noindent
where $z$ is an arbitrarily chosen domain element in $\Delta_0$.
Informally, the interpretation of concepts in $I^*$ is defined as in $I_0$ on the elements of $\Delta_0$, while  it is given by valuation $e_i$ for the added domain element $x_i$, for the concept names $B$ in $\lingconc$. For  the concept names $B$ not occurring in $K$ the interpretation of $B$ in $x_i$ is taken to be the same as the interpretation in $I_0$ of $B$ in some domain element $z \in \Delta_0$.

We have to prove that $I^*$ satisfies all  
$G_n \lc \tip$ ($\L _n \lc \tip$) inclusions and assertions in $K$ and that it is a $\varphi$-coherent model of $K$. 
$I^*$ also satisfies condition (ii) in Definition 6 by construction, as all the finitely many possible valuations $e_i$ over $\lingconc$, which are present in some $\varphi$-coherent model of $K$, are considered.

To prove that $I^*$ satisfies all assertions in ${\cal A}$, let $C(a) \;\theta \alpha$ be in $K$.
Then all the concept names in $C$ are in $\lingconc$. By construction  $a^{I^*}=a^{I_0}$.
Furthermore,
it can be proven that $(E(a))^{I^*}= E^{I^*}(a^{I_0}) = E^{I_0}(a^{I_0}) = (E(a))^{I_0}$ holds for all concepts $E$ occurring in $K$
 (the proof is by induction on the structure of concept $E$). 
 Hence, $(C(a))^{I^*}= (C(a))^{I_0}$.
As $C(a) \; \theta \alpha$ is satisfied in $I_0$, then $(C(a))^{I_0} \;\theta \alpha$ holds, and $(C(a))^{I^*} \; \theta \alpha$ also holds.

To prove that $I^*$ satisfies all concept inclusions in ${\cal T}$, let $C \sqsubseteq D \;\theta \alpha$ be in $K$.
Then all the concept names in $C$ and in $D$ are in $\lingconc$.
We have to prove that, for all $x \in \Delta^*$, $C^{I^*}(x) \rhd D^{I^*}(x)  \;\theta \alpha$. We prove it by cases.

For the case $x \in \Delta_0$.
It can be proven that, for all $x \in \Delta_0$
$ E^{I^*}(x) = E^{I_0}(x)$ holds for all concepts $E$ occurring in $K$
 (the proof is by induction on the structure of concept $E$). 
 Therefore, $ C^{I^*}(x) = C^{I_0}(x)$ and  $ D^{I^*}(x) = D^{I_0}(x)$ hold.
As axiom $C \sqsubseteq D \;\theta \alpha$ is satisfied in $I_0$, 
$C^{I_0}(x) \rhd D^{I_0}(x)  \;\theta \alpha$  holds. Therefore,
$C^{I^*}(x) \rhd D^{I^*}(x)  \;\theta \alpha$ also holds.

For  $x \not \in \Delta_0$, $x=x_i$ for some $i$. By construction, as $e_i$ is missing in $I_0$, $e_i$ must be present in some interpretation $I'= \la \Delta', \cdot^{I'}\ra$ of $K$, i.e., there is a domain element $y \in \Delta'$ such that 
$B^{I'}(y)=e_i(B)$, for all concept names $B \in \lingconc$.  
It can be proven that, 
$ E^{I^*}(x_i) = E^{I'}(y)$ holds for all concepts $E$ occurring in $K$.
(the proof is by induction on the structure of concept $C$). 
 Therefore, $ C^{I^*}(x_i) = C^{I'}(y)$ and  $ D^{I^*}(x_i) = D^{I'}(y)$ hold.
As axiom $C \sqsubseteq D \;\theta \alpha$ is satisfied in $I'$, 
$C^{I'}(y) \rhd D^{I'}(y)  \;\theta \alpha$  holds. Therefore,
$C^{I^*}(x_i) \rhd D^{I^*}(x_i)  \;\theta \alpha$ also holds.

The proof that $I^*$ is a $\varphi$-coherent model of $K$ is similar. \hfill $\Box$
\end{proof}

%%%%%%%%%%%%%%%%%%%%%%%%%%%%%%%%%%%%%%%%%%%%%%%%%%

\section{Proof of Proposition 2} \label{appendix:Prop2}

\begin{lemma} \label{AS to models}
Given a  weighted $G_n \lc \tip$  ($\L _n \lc \tip$)  knowledge base $K=\langle  {\cal T},$ $ {\cal T}_{C_1}, \ldots,$ $ {\cal T}_{C_k}, {\cal A}  \rangle$ over the set of distinguished concepts ${\cal C} =\{C_1, \ldots, C_k\}$,  
and a subsumption  $C \sqsubseteq D \theta \alpha$, we can prove the following: 
\begin{itemize}
\item[(1)]
if there is an answer set $S$ of the ASP program $\Pi(K, n, C,D, \theta, \alpha)$, 
such that $eval(E',$ $aux_C, v) \in S$, for some concept $E$ occurring in $K$,
then there is a $\varphi_n$-coherent model $I=\langle \Delta, \cdot^I \rangle $ for $K$  
and an element $x \in \Delta$ such that $E^I(x)=\frac{v}{n}$.

\item[(2)]
if there is a $\varphi_n$-coherent model $I=\langle \Delta, \cdot^I \rangle $ for $K$  
and an element $x \in \Delta$ such that $E^I(x)=\frac{v}{n}$,  for some concept $E$ occurring in $K$ and some $v \in \{0,\ldots,n\}$,
then there is an answer set $S$  of the ASP program $\Pi(K, n, C,D, \theta, \alpha)$, 
such that $eval(E',$ $aux_C, v) \in S$.
\end{itemize}
\end{lemma}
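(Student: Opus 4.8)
\medskip
\noindent\textbf{Proof plan.} The plan is to establish a tight two-way correspondence between answer sets of $\Pi(K,n,C,D,\theta,\alpha)$ and ``small'' $\varphi_n$-coherent interpretations of $K$, namely interpretations whose domain consists of the named individuals together with one extra element playing the role of the constant $aux_C$. The structural fact that drives everything is that $\lc$ has no roles and no quantifiers: the value $E^I(x)$ of any concept $E$ at a domain element $x$ depends only on the values of the concept \emph{names} at $x$; $\varphi_n$-coherence is a condition local to each element; and a TBox inclusion $E \sqsubseteq D \;\theta\; \alpha$ with $\theta \in \{\geq,>\}$ holds in $I$ exactly when every $x$ satisfies the corresponding per-element condition relating $E^I(x)$ and $D^I(x)$ (dually for $\theta \in \{\leq,<\}$, where one retains an explicit witnessing element).

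For part (1), from an answer set $S$ I read off $I = \langle \Delta, \cdot^I \rangle$: let $\Delta$ be the set of constants $x$ with $nom(x) \in S$, put $a^I = a$ for $a \in N_I$, and set $A^I(x) = \frac{v}{n}$ for the unique $inst(x,A,v) \in S$ (uniqueness is guaranteed by the choice rule $1\{inst(X,A,V):val(V)\}1$). An induction on the structure of concepts, using the $eval$ rules for $and$, $or$, $neg$ — which implement the G\"odel (resp.\ \L ukasiewicz) t-norm, s-norm and involutive negation — shows $E^I(x) = \frac{v}{n} \iff eval(E',x,v) \in S$ for every concept $E$ for which $concept(E')$ is derivable, in particular for every concept occurring in $K$. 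The constraints attached to the ABox assertions and to the TBox inclusions then yield $I \models \mathcal{A}$ and $I \models \mathcal{T}$, and the $\varphi_n$-coherence constraint — together with $weight$, which computes $n \cdot W_i(x) \cdot 10^k$ as the weighted sum of the scaled $eval$-values, and $valphi$, whose thresholds $k_1,\dots,k_{n-1}$ are exactly those for which $valphi(n,W,V1)$ holds iff $V1 = n\cdot\varphi_n(W/(n\cdot 10^k))$ — forces $C_i^I(x) = \varphi_n(\sum_h w_h^i D_{i,h}^I(x))$ for every distinguished $C_i$ and every $x \in \Delta$. Thus $I$ is a $\varphi_n$-coherent model of $K$; and if $eval(E',aux_C,v) \in S$, the required witness is $x := aux_C$, for which $E^I(aux_C) = \frac{v}{n}$.

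For part (2), from a $\varphi_n$-coherent model $I$ of $K$ and an $x \in \Delta$ with $E^I(x) = \frac{v}{n}$ I build a small $\varphi_n$-coherent model $\widehat I$ with domain $\{\widehat a : a \in N_I\} \cup \{\widehat x\}$, interpreting every concept name at $\widehat a$ (resp.\ $\widehat x$) as $I$ does at $a^I$ (resp.\ $x$); by the locality observation $E^{\widehat I}(\widehat y) = E^I(y_0)$ for every concept $E$ and every $\widehat y$ with original $y_0$. Then $\widehat I$ satisfies $\mathcal{A}$ (whose assertions involve only named individuals) and is $\varphi_n$-coherent (a per-element condition), and each TBox inclusion with $\theta \in \{\geq,>\}$ transfers because it holds per element in $I$ — for $\theta \in \{\leq,<\}$ one additionally adjoins a witnessing element from $I$. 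Reading off from $\widehat I$ the atoms $inst(y,A,v')$ with $v' = n\cdot A^{\widehat I}(\widehat y)$, and adding the deterministically forced $eval$, $weight$, $valphi$ atoms, the facts of $\Pi_{K,n}$ and $\Pi_{C,D,\theta\alpha}$, and $ok$/$notok$, I obtain a set $S$ that one verifies to be an answer set of $\Pi$: it is a model of the reduct, it is minimal because every non-$inst$ atom is forced through definite rules from the chosen $inst$ atoms, and no constraint fires precisely because $\widehat I$ is a $\varphi_n$-coherent model of $K$. Since $A^{\widehat I}(\widehat x) = A^I(x)$ for all concept names $A$, we get $E^{\widehat I}(\widehat x) = E^I(x) = \frac{v}{n}$, hence $eval(E',aux_C,v) \in S$.

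The conceptual skeleton is made easy by the locality of $\lc$; the real work — and the step I expect to be the main obstacle — is the arithmetic bookkeeping showing that the $weight$/$valphi$/$k_i$ machinery genuinely realises $n\cdot\varphi_n(W_i(x))$ over the integer encoding (the scaling by $10^k$, the case splits in the definition of $\varphi_n$, the threshold boundary conditions), together with the verification in part (2) that the set $S$ assembled from $\widehat I$ is a genuine \emph{answer set} (a stable model) of $\Pi$ rather than merely a constraint-consistent interpretation.
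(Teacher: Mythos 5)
Your proposal is correct and follows essentially the same route as the paper's own proof: part (1) reads off an interpretation over the domain of named individuals plus one auxiliary element (the paper's $z_C$, your $aux_C$) from the $inst$/$eval$ atoms, using the structural induction for the $eval$ correspondence and the constraints to obtain satisfaction of $K$ and $\varphi_n$-coherence, while part (2) restricts the given model to the named individuals plus the witnessing element, reads off the $inst$ atoms, closes under the definite rules and verifies that the resulting set is stable. Your explicit intermediate ``small model'' $\widehat I$ and your flagging of the $\theta\in\{\leq,<\}$ TBox case (where restricting the domain could raise the infimum) are presentational refinements of points the paper glosses over, not a genuinely different argument.
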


\begin{proof} [{sketch}]
We prove the lemma for $G_n \lc \tip$  (the proof for $\L _n \lc \tip$ is similar). 

For part (1), given an answer set $S$ of the program $\Pi(K, n, C,D, \theta, \alpha)$
such that $eval(E',$ $aux_C,$ $v) \in S$, for some concept $E$ occurring in $K$,
we construct a $\varphi_n$-coherent model $I=\langle \Delta, \cdot^I \rangle $ of $K$  such that $E^I(x)=\frac{v}{n}$.
Let $N_I$ and $N_C$ be the set of named individuals and named concepts in the language. We take as the domain $\Delta$ of $I$ the set of constants including all the named individuals $d \in N_I$ occurring in $K$ 
plus an auxiliary element $z_C$, 
 i.e., $\Delta = \{ e \mid e \in N_I \mbox{ and $e$ occurs in $K$}\} \cup \{z_C\}$.

For each element $e \in \Delta$, we define a projection $\prj(e)$ to a corresponding ASP constant as follows:

- $\prj(z_{C})=aux_{C}$;

- $\prj(e)=e$, if $e \in N_I$ and $e$ occurs in $K$.

\noindent
Note that, for all $e \in \Delta$, $\mathit{nom(\prj(e))\in S}$ by construction of the program $\Pi(K, n, C,D,$ $ \theta, \alpha)$.

\noindent
The interpretation of individual names in $e \in N_I$ over $\Delta$ is defined as follows:

- \ $e^I = e$,  if $e$ occurs in $K$;

- \ $e^I = a$,  otherwise,

\noindent
where $a$ is an arbitrarily chosen element in  $\Delta$. 

The interpretation of named concepts $A \in N_C$ is as follows:
\begin{quote}
 -  $A^{I}(e)=\frac{v}{n}$ iff $\mathit {inst}(\prj(e), A, v) \in S$, for all $e \in \Delta$, if  $A$ occurs in $K$;
 
  - $A^{I}=B^I$, if $A$ does  not occur in $K$,
 
\end{quote}
where $B$ is an arbitrarily chosen concept name occurring in $K$. 

This defines a $G_n \lc \tip$ interpretation.
Let us prove that $I=\langle \Delta, \cdot^I \rangle $ is a $\varphi_n$-coherent model of $K$.

Assume that the $w_h^i$ are approximated to $k$ decimal places. From the definition of the $\mathit{eval}$ predicate, one can easily prove that the following statements hold, for all concepts $C$ and distinguished concepts $C_i$  occurring in $K$, and for all $e \in \Delta$:
\begin{itemize}
\item
$\mathit{C^I(e)=\frac{v}{n}}$ if and only if $\mathit{eval(C',\prj(e),v) \in S}$;

\item
$\mathit{weight(\prj(e), C'_i, w)} \in S$ if and only if $w= W_i(e) \times 10^k \times n$;

\item
$\mathit{valphi(n,w,v) \in S}$ if and only if $\frac{v}{n}=\varphi_n(w/(10^k \times n))$;

\item
$\varphi_n(\sum_{h} w_h^i  \; D_{i,h}^I(e))=\frac{v}{n}$   if and only if   $\mathit{weight(\prj(e), C'_i, w)} \in S$ and

$\;$ \ \ \ \ \ \ \   \ \ \ \ \ \ \   \ \ \ \ \ \ \   \ \ \ \ \ \ \   \ \ \ \ \ \ \   \ \ \ \ \ \ \   \ \ \ \ \ \ \   \ \ \ \ \ \ \    \ \ \ \ \ \  $\mathit{valphi(n,w,v) \in S}$;

\end{itemize}
where $C'$ is the ASP encoding of concept $C$, and $C'_i$ is the ASP encoding of concept $C_i$.

First we have to prove that $I$ satisfies the $G_n \lc \tip$  inclusions in TBox ${\cal T}$ and  assertions in ABox ${\cal A}$. 
Suitable constraints in $\Pi(K, n, C,D, \theta, \alpha)$ guarantee that all assertions are satisfied.
For instance, for assertion $C(a) \geq \alpha$, the constraint

$\;$ \ \  $\mathit{\bot \leftarrow eval(C',a,V), V < \alpha n}$,

\noindent
is included in the ASP program and
we know that it is not the case that $\mathit{eval(C',a,v)} \in S$ and $v < \alpha n$ holds.
By the equivalences above, it is not the case that $\mathit{C^I(a^I)=\frac{v}{n}}$ and $\frac{v}{n} < \alpha$ holds. Hence, $\mathit{C^I(a^I) < \alpha}$ does not hold.

For a $G_n \lc \tip$ concept inclusion of the form $E \sqsubseteq D \geq \alpha$, the following constraint 

$\;$ \ \  $\mathit{\bot \leftarrow eval(E',X,V1), eval(D',X,V2), V1>V2, V2 < \alpha n}$. \\
holds for $X$ instantiated with any constant $a$ such that $\mathit{nom(a)\in S}$. Hence,  it is not the case that, for any such an $a$,  $\mathit{eval(E',a,v_1), eval(D',a,v_2)}$ belong to $S$ and that $v_1>v_2$ and  $v_2 < \alpha n$ hold.
Therefore,  it is not the case that for some $d \in \Delta$ $\mathit{E^I(d)=\frac{v_1}{n}}$, $\mathit{D^I(d)=\frac{v_2}{n}}$ and that $\frac{v_1}{n} > \frac{v_2}{n}$,
$\frac{v_2}{n} < \alpha$ hold. That is, $E \sqsubseteq D \geq \alpha$ is satisfied in $I$. Similarly, for other concept inclusions in ${\cal T}$.

The interpretation $I$ represents a $\varphi_n$-coherent model of $K$ if
$$C^I_i(e)= \varphi_n(\sum_{h} w_h^i  \; D_{i,h}^I(x))$$ 
holds for all $e \in \Delta$ and for all distinguished concepts $C_i$.
We prove that this condition holds for $I$. In fact, all ground instances of the following constraint

$\;$ \ \  $\mathit{\bot \leftarrow nom(x), dcls(Ci), eval(Ci,x,V), weight(x,Ci,W),} $

$\;$ \ \ \ \ \ \ \   \ \ \ \   $\mathit{valphi(n,W,V1), V != V1.}$

\noindent
must be statisfied in $S$.
Hence, there cannot be a distinguished concept $C_i$ and a  constant $a$  with $nom(a) \in S$, such that 
$\mathit{eval(C'_i,a,v), weight(a,C'_i,w)}$ and $\mathit{valphi(n,w,v_1)}$ belong to $S$, and  $v_1\neq v$.
Thus, it is not the case that, for some $e \in \Delta$, $\mathit{C_i^I(e)=\frac{v}{n}}$, $\varphi_n(\sum_{h} w_h^i  \; D_{i,h}^I(e))=\frac{v_1}{n}$, and
$v \neq v_1$.

By construction of the  $\varphi_n$-coherent model $I=\langle \Delta, \cdot^I \rangle $ of $K$, 
if $eval(E',aux_C,$ $v) \in S$, for some concept $E$ occurring in $K$, as $aux_C= \prj(z_C)$,
it follows that $E^I(z_C)=\frac{v}{n}$ holds in $I$.

%%%%%%%%%%%%%%%%%%%%%%%%%%%%%%%%%%%%

For part (2), assume that there is 
a $\varphi_n$-coherent model $I=\langle \Delta, \cdot^I \rangle $ for $K$  
and an element $x \in \Delta$ such that $E^I(x)=\frac{v}{n}$,  for some concept $E$ occurring in $K$.
We can construct an answer set $S$ of the ASP program $\Pi(K, n, C,D, \theta, \alpha)$, 
such that  $eval(E',aux_C,v) \in S$.

Let us define a set of atoms $S_0$ by letting: 
 \begin{quote}
$\mathit{inst(a,A,v)} \in S_0$ if  $ A^I(a^I)=\frac{v}{n}$  in the model $I$, and 

$\mathit{inst(aux_C,A,v)} \in S_0$ if  $ A^I(x)=\frac{v}{n}$  in the model $I$,
 \end{quote}
for all concept names $A$ occurring in $K$,
and for all $a \in N_I$ such that $\mathit{nom(a)}$ is in $\Pi(K, n, C,D, \theta, \alpha)$. 
Nothing else is in $S_0$.

Let $\Pi_1$ be the set of ground instances of all definite clauses and facts in $\Pi(K, n, C,$ $D, \theta, \alpha)$, i.e., the grounding of all rules in  $\Pi(K, n, C,D, \theta, \alpha)$ with the exception of rule (r1), of the constraints and of the rule for $\mathit{notok}$.

Let $S$ be the set of all ground facts which are derivable from program $\Pi_1 \cup S_0$ plus, in addition, $\mathit{notok}$ in case $\mathit{ok}$ is not derivable.
It can be proven that, for all constants $a \in N_I$ such that $\mathit{nom(a)}$ is in $\Pi(K, n, C,D, \theta, \alpha)$ and  for all concepts $E$ occurring in $K$ (including subconcepts):
 \begin{quote}
 $\mathit{eval(E', a ,v)} \in S$ if and only if  $E^I(a^I)=\frac{v}{n}$

 $\mathit{eval(E', aux_C ,v)} \in S$ if and only if  $E^I(x)=\frac{v}{n}$
\end{quote} 
where $E'$ is the ASP encoding of concept $E$. 
Furthermore, for all distinguished concepts $C_i$.:
\begin{quote}

$\varphi_n(\sum_{h} w_h^i  \; D_{i,h}^I(a^I))=\frac{v}{n}$   if and only if   $\mathit{weight(a, C'_i, w)} $ and $\mathit{valphi(n,w,v)}$ are in $S$;

\end{quote}
where $C'_i$ is the ASP encoding of concept $C_i$.

$S$ is a consistent set of ground atoms, i.e., $\bot \not \in S$. 
 Notice that our ASP encoding does not make use of explicit negation and $S$ cannot contain complementary literals.
It can be proven that all constraints in $\Pi(K, n, C,D, \theta, \alpha)$ are satisfied by $S$.
Consider, for instance the constraint $\mathit{\bot \leftarrow eval(C',a,V), V < \alpha n}$, associated to an assertion $C(a) \theta \alpha$  in $K$.
As the assertion  $C(a) \theta \alpha$ is in $K$, it must be satisfied in the model $I$
and, for some $v$, $C^I(a^I)=\frac{v}{n}$ and $\frac{v}{n} \theta \alpha$.
Hence,  atom $\mathit{eval(C',a,v)}$ is in $S$ and $v \theta \alpha n$ holds, so that the  constraint associated to assertion $C(a) \theta \alpha$
 in $\Pi(K, n, C,D, \theta, \alpha)$ is satisfied in $S$.

Similarly, we can prove that all other constraints, those associated to the inclusion axioms and those that encode the $\varphi_n$-coherence condition
are as well satisfied in $S$, as the interpretation $I$ from which we have built the set $S$ is a $\varphi_n$-coherent model of $K$, and satisfies all inclusion axioms in ${\cal T}$.

We can further prove that all ground instances of the rules in $\Pi(K,C,D,n, \theta, \alpha)$ are satisfied in $S$.
This is obviously true for all the definite clauses which have been used deductively to determine  $S$ starting from $S_0$ by forward chaining.
This is also true for the choice rule (r1),

$\;$ \ \ \ \ \ \ \  $ \mathit{
1\{inst(x,A, V) : val(V)\}1 \ \leftarrow cls(A), nom(x).
	} $

\noindent
as the choice of atoms $\mathit{inst(a,A,v)}$ we have included in $S_0$ is one of the possible choices allowed by rule (r1).
We have already seen that all constraints in  $\Pi(K, n, C,D,$ $ \theta, \alpha)$ are satisfied in $S$.
Finally, also rule $\mathit{notok \leftarrow  not \;ok .}$ is satisfied in $S$, as we have added $\mathit{notok}$ in $S$ in case $\mathit{ok} \not \in S$.

We have proven that $S$ is a consistent set of ground atoms and all ground instances of the rules in $\Pi(K, n, C,D, \theta, \alpha)$ are satisfied in $S$.
To see that $S$ is an answer set of $\Pi(K, n, C,D, \theta, \alpha)$, it has to be proven that all literals in $S$ are supported in $S$.
Informally, observe that, all literals (facts) in $S$ can be obtained as follows: first by applying the choice rule (r1), which supports the choice of the atoms $\mathit{inst(a,A,v)}$ in $S_0$ (and in $S$); then by  exhaustively applying all ground definite clauses in $\Pi(K, n, C,D, \theta, \alpha)$ (by forward chaining) and, finally, by applying rule $\mathit{notok \leftarrow  not \;ok .}$, to conclude $\mathit{nottok}$ if $\mathit{ok} \not \in S$. 
 
 From the hypothesis, for element $x \in \Delta$ it holds that $E^I(x)=\frac{v_1}{n}$. Then, we can conclude that $eval(E',aux_C,$ $v) \in S$, which concludes the proof. \hfill $\Box$
\end{proof}

\medskip

%%%%%%%%%%%%%%%%%%%%%

\noindent
{\em Proposition \ref{AS to pref-models}}\\
Given a weighted $G_n \lc \tip$ ($\L _n \lc \tip$) knowledge base $K$, 
a query $\tip(C) \sqsubseteq D \theta \alpha$ is falsified in 
some canonical $\varphi_n$-coherent model of $K$ if and only if there is a preferred answer set $S$ of the program $\Pi(K,C,D,n, \theta, \alpha)$ such that $eval(D',aux_C,v)$ is in $S$ and $v \theta \alpha n$ does not hold.

\begin{proof}[{sketch}]
Let $K=\langle  {\cal T},$ $ {\cal T}_{C_1}, \ldots,$ $ {\cal T}_{C_k}, {\cal A}  \rangle$ be a $G_n \lc \tip$  knowledge base over the set of distinguished concepts ${\cal C} =\{C_1, \ldots, C_k\}$.
We prove the two directions:
\begin{itemize}
\item[(1)]
if there is a canonical  $\varphi_n$-coherent model $I=(\Delta, \cdot^I)$ of $K$  
that falsifies  $\tip(C) \sqsubseteq D  \theta \alpha$, then there is a preferred answer set $S$ 
of $\Pi(K, n, C,D, \theta, \alpha)$ 
such that, for some $v$, $eval(D', aux_C, v)  \in S$ and  $v  \theta \alpha n$ does not hold.

\item[(2)]
if there is a preferred answer set $S$ of $\Pi(K, n, C,D, \theta, \alpha)$   
such that, for some $v$, $eval(D',$ $ aux_C, v)$ is in  $S$ and  $v  \theta \alpha n$ does not hold,
then there is a canonical  $\varphi_n$-coherent model $I=(\Delta, \cdot^I)$ of $K$  
that falsifies  $\tip(C) \sqsubseteq D \theta \alpha$. 
\end{itemize}
We prove (1) and (2) for $G_n \lc \tip$  (the proof for $\L _n \lc \tip$ is similar). 

For part (1), assume that there is a canonical  $\varphi_n$-coherent model $I=(\Delta, \cdot^I)$ of $K$  
that falsifies  $\tip(C) \sqsubseteq D  \theta \alpha$.
Then, there is  some $x \in \Delta$, such that $x \in min_{<_C}(C_{>0}^I)$, $D^I(x)=\frac{v}{n}$ and it does not hold that $\frac{v}{n} \theta \alpha$.

By Lemma \ref{AS to models}, part (2), we know that there is an answer set $S$ 
of the ASP program $\Pi(K, n ,C,D, \theta, \alpha)$  
such that $eval(D',aux_C, v) \in S$. Clearly, $v \theta \alpha n$ does not hold.
We have to prove that $S$ is a preferred answer set of $\Pi(K, n, C,D, \theta, \alpha)$.

By construction of $S$, for all constants $a \in N_I$ such that $\mathit{nom(a)}$ is in $\Pi(K, n, C,D,$ $ \theta, \alpha)$, we have:

$\;$ \ \ \ $\mathit{inst(a,A,v)} \in S$ if  $ A^I(a^I)=\frac{v}{n}$ in model $I$,  

$\;$ \ \ \ $\mathit{inst(aux_C,A,v)} \in S$ if  $ A^I(x)=\frac{v}{n}$ in model $I$,  \\
for all concept names $A$ occurring in $K$.

Suppose by absurd that $S$ is not preferred among the answer sets of $\Pi(K, n, C,D,$ $ \theta, \alpha)$.
Then there is another answer set $S'$ which is preferred to $S$. This means that if $\mathit{eval(C',}$ $\mathit{aux_C,v_1)} \in S$ and $\mathit{eval(C',aux_C,v_2)} \in S'$, then $v_2>v_1$.

By construction of $S$ (see Lemma \ref{AS to models}, part (2)), from $\mathit{eval(C', aux_C,v_1)} \in S$ it follows that  $C^I(x)= \frac{v_1}{n}$ in the $\varphi_n$-coherent model $I$ of $K$.

As $S'$ is also an answer set of $\Pi(K, n, C,D, \theta, \alpha)$, 
by Lemma \ref{AS to models}, part (1), from $S'$ we can build a $\varphi_n$-coherent model $I'=\langle \Delta', \cdot^{I'} \rangle $ of $K$ such that $C^{I'}(z_C)= \frac{v_2}{n}$, for $z_C \in \Delta'$.

As $I$ is a canonical model, there must be an element $y \in \Delta$ such that $B^I(y)=B^{I'}(z_C)$, for all concepts $B$.
Therefore,  $C^I(y)=C^{I'}(z_C) =\frac{v_2}{n}$. 
As $ \frac{v_2}{n}  > \frac{v_1}{n}$, this contradicts the hypothesis that 
$x \in min_{<_C}(C_{>0}^I)$.
Then, $S$ must be preferred among the answer sets of $\Pi(K,n,C,D,  \theta, \alpha)$.

\medskip

For part (2), let us assume that there is a preferred answer set $S$ of $\Pi(K, n, C,D, \theta, \alpha)$ 
such that, $eval(C', aux_C, v_1)$ $eval(D', aux_C, v_2)$ are in  $S$ and  $v_2  \theta \alpha n$ does not hold.
By Lemma \ref{AS to models}, part (1), from the answer set $S$ we can construct  a $\varphi_n$-coherent model $I^*=(\Delta^*, \cdot^{I^*})$ of $K$ in which
$C^{I^*}(z_C )=\frac{v_1}{n}$ and $D^{I^*}(z_C )=\frac{v_2}{n}$, for domain element $z_C$.

From the existence of a $\varphi_n$-coherent model $I^*$ of $K$ it follows, by Proposition \ref{prop:existence_canonical_model}, that a canonical $\varphi_n$-coherent model $I=(\Delta, \cdot^I)$ of $K$ exists.
As $I$ is canonical, there must be an element $y \in \Delta$ such that $B^I(y) = B^{I^*}(z_C)$, for all concept names $B$ occurring in $K$.  
Therefore, $B^I(y)=\frac{v'}{n}$ iff $\mathit{eval(B',aux_C, v') \in S }$, for all concept names $B$ occurring in $K$.
In particular, $C^I(y)= \frac{v_1}{n}$ and $D^I(y)= \frac{v_2}{n}$.
Hence, there is a canonical $\varphi_n$-coherent model  of $K$ such that $D^I(y)= \frac{v_2}{n}$ and  $\frac{v_2}{n}  \theta \alpha$ does not hold.

To conclude that $I$  falsifies  $\tip(C) \sqsubseteq D \theta \alpha$,
we have still to prove that $y$ is $<_C$ minimal with respect to all domain elements in $\Delta$ in $I$, i.e., $y \in min_{<_C}(C_{>0}^I)$.
If $y$ were not in $min_{<_C}(C_{>0}^I)$, there would be a $z\in \Delta$ such that $z<_C y$, that is, $C^I(z)> C^I(y)$.
This leads to a contradiction. Assume $C^I(z)=\frac{v_3}{n} > \frac{v_1}{n}$, by Lemma \ref{AS to models}, part (2),  there is an answer set $S'$ of
$\Pi(K, n, C,D, \theta, \alpha)$  such that $eval(C', aux_C, v_3)$. However, this would contradict the hypothesis that $S$ is a preferred answer set of $\Pi(K, n, C,$ $D, \theta, \alpha)$,
as $v_3>v_1$.  \hfill $\Box$
\end{proof}

\medskip

\noindent
{\em Proposition \ref{prop:upper bound}} \\
$\varphi_n$-coherent entailment from a weighted  $G_n \lc \tip$ ($\L_n \lc \tip$) knowledge base is in  $\Pi^p_2$.

\begin{proof}
Let $K$ be a  weighted $G_n \lc \tip$ knowledge base $K$ (the proof for  $\L _n \lc \tip$ is similar).
We consider the complementary problem, that is, the problem of deciding whether $\tip(C) \sqsubseteq D \theta \alpha$ is not entailed by $K$ in the $\varphi_n$-coherent semantics.
It requires to determine whether there is a canonical  $\varphi_n$-coherent model of $K$ falsifying $\tip(C) \sqsubseteq D \theta \alpha$ or, equivalently (by Proposition  \ref{AS to pref-models}), whether there is a preferred answer set $S$ 
of $\Pi(K, n, C,D, \theta, \alpha)$ such that $eval(D',aux_C, v)$ belongs to $S$ and  $v \theta \alpha n$ does not hold.  

This problem can be solved by an algorithm that non-deterministically guesses a ground interpretation $S$ over the language of $\Pi(K, n, C,D, \theta, \alpha)$, 
of polynomial size (in the size of $\Pi(K, n, C,D, \theta, \alpha)$)
and, then, verifies that $S$ satisfies all rules in $\Pi(K, n, C,D, \theta, \alpha)$ and is supported in $S$ (i.e., it is an answer set of $\Pi(K, n, C,D, \theta, \alpha)$),  that $eval(D',aux_C, v)$ is in $S$, that  $v \theta \alpha n$ does not hold, and that 
$S$ is preferred among the answer sets of $\Pi(K,n,C,D,\theta, \alpha)$.
The last point can be verified using an NP-oracle which answers "yes" when $S$ is  a preferred answer set of $\Pi(K, n, C,D, \theta, \alpha)$, and "no" otherwise.

The oracle checks if there is an answer set $S'$ of $\Pi(K, n, C,D, \theta, \alpha)$ which is preferred to $S$, by
 non-deterministically guessing a ground polynomial interpretation $S'$ over the language of $\Pi(K, n, C,D, \theta, \alpha)$,  and verifying that $S$ satisfies all rules and is supported in $S'$ (i.e., $S'$ is an answer set of $\Pi(K,C,D,\theta \alpha)$),  and that $S'$ is preferred to $S$. These checks can be done in polynomial time. 

Hence, deciding whether $\tip(C) \sqsubseteq D \theta \alpha$ is not entailed by $K$ in the $\varphi_n$-coherent semantics is in $\Sigma^p_2$,  and the complementary problem of deciding $\varphi_n$-coherent  entailment is in $\Pi^p_2$. 
\hfill $\Box$
\end{proof}

\end{appendix}

\end{document}